\DeclarePairedDelimiter{\norm}{\lVert}{\rVert}
\definecolor{rowcolor}{RGB}{185, 229, 232} 
\definecolor{conicgrad}{RGB}{255, 192, 217}
\definecolor{cagrad}{RGB}{212, 235, 248}
\definecolor{myred}{RGB}{225, 0, 0}
\definecolor{myblue}{RGB}{0, 112, 192}
\theoremstyle{plain}
\newtheorem{theorem}{Theorem}[section]
\newtheorem{proposition}[theorem]{Proposition}
\theoremstyle{definition}
\theoremstyle{remark}
\newtheorem{remark}[theorem]{Remark}
\icmltitlerunning{Fantastic Multi-Task Gradient Updates and How to Find Them In a Cone}
\begin{document}

\twocolumn[
\icmltitle{Fantastic Multi-Task Gradient Updates and How to Find Them In a Cone}





\icmlsetsymbol{equal}{*}

\begin{icmlauthorlist}
\icmlauthor{Negar Hassanpour}{yyy}
\icmlauthor{Muhammad Kamran Janjua}{equal,yyy}
\icmlauthor{Kunlin Zhang}{equal,yyy} \\
\icmlauthor{Sepehr Lavasani}{yyy}
\icmlauthor{Xiaowen Zhang}{comp}
\icmlauthor{Chunhua Zhou}{yyy}
\icmlauthor{Chao Gao}{yyy}
\end{icmlauthorlist}

\icmlaffiliation{yyy}{Huawei Technologies, Canada}
\icmlaffiliation{comp}{HiSilicon}

\icmlcorrespondingauthor{Negar Hassanpour}{negar.hassanpour2@huawei.com}
\icmlcorrespondingauthor{Muhammad Kamran Janjua}{muhammad.kamran.janjua@huawei.com}

\icmlkeywords{Machine Learning, ICML}

\vskip 0.3in
]



\printAffiliationsAndNotice{\icmlEqualContribution} 

\begin{abstract}

Balancing competing objectives remains a fundamental challenge in multi-task learning (MTL),
primarily due to conflicting gradients across individual tasks. 
A common solution relies on computing a dynamic gradient update vector that balances competing tasks as optimization progresses.
Building on this idea, we propose \textsc{ConicGrad}, a principled, scalable, and robust MTL approach formulated as a constrained optimization problem. 
Our method introduces an angular constraint to dynamically regulate gradient update directions, 
confining them within a cone centered on the reference gradient of the overall objective. 
By balancing task-specific gradients without over-constraining their direction or magnitude, 
\textsc{ConicGrad} effectively resolves inter-task gradient conflicts. 
Moreover, our framework ensures computational efficiency and scalability to high-dimensional parameter spaces. 
We conduct extensive experiments on standard supervised learning and reinforcement learning MTL benchmarks,
and demonstrate that \textsc{ConicGrad} achieves state-of-the-art performance across diverse tasks.

\end{abstract}

\section{Introduction}
\label{intro}

\begin{figure*}[t]
    \centering
    \includegraphics[width=\linewidth]{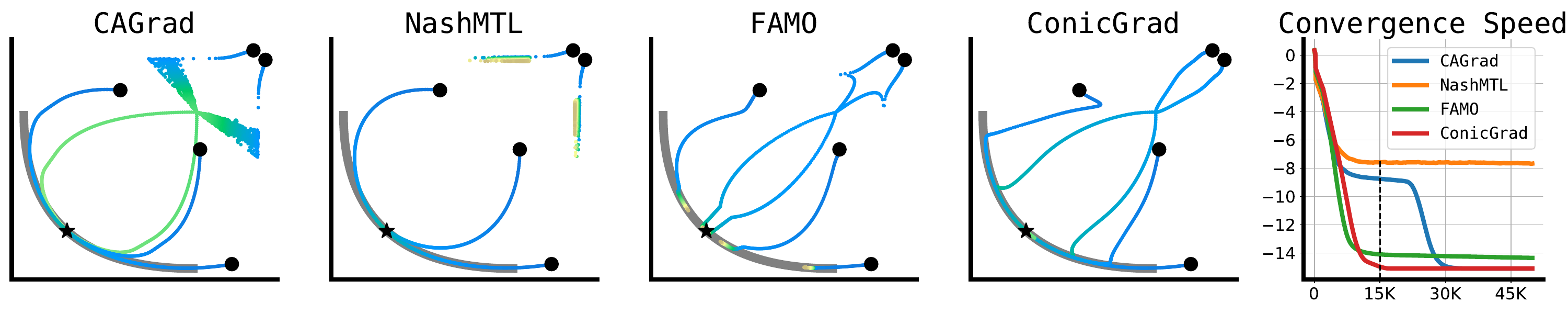}
    \caption{\textbf{Toy Experiment.} 
    The four plots on the left-side visualize the loss trajectories of various MTL methods 
    from 5~initialization points ({\(\bullet\)}) on a toy 2-task learning problem (see \cref{sec:toyres} and \cref{app:toy_detail} for more details). 
    Trajectories transition from blue to green, indicating progress over time. 
    All 5 initialization points for FAMO reach the Pareto front~(gray curve), 
    while and 3 for NashMTL and all 5 for both CAGrad and \textsc{ConicGrad} reach the global minima~(\scalebox{0.7}{\(\bigstar\)})
    with \textsc{ConicGrad} converging significantly faster. 
    The plot on the far-right compares the convergence speeds over training steps, 
    showing that \textsc{ConicGrad} achieves the lowest loss (dashed black line) faster than all competing methods.}
    \label{fig:toyexample}
\end{figure*}

In many real-world machine learning applications, resources such as data, computation, and memory are limited~\citep{navon2022multi,yu2020gradient,xiao2024direction}. 
Therefore, instead of Single-Task Learning (STL)~\citep{long2015fully,he2017mask} that trains an independent model for each downstream task,
it is often advantageous to share parts of the model structure across multiple tasks, a paradigm known as Multi-Task Learning (MTL)~\citep{vandenhende2021multi}.
MTL aims to learn a shared representation while simultaneously optimizing for several different tasks, thereby improving efficiency and generalization. 

MTL approaches can be broadly categorized into multi-task architectures~\citep{lin2025mtmamba,zhang2025sgw} and optimization strategies~\citep{navon2022multi,yu2020gradient,xiao2024direction,NEURIPS2023_b2fe1ee8}. 
Architectural methods leverage parameter sharing to reduce redundancy and maximize learning across tasks. 
However, even with efficient architectures, optimizing multiple losses concurrently remains challenging, 
as naive strategies such as utilizing the reference objective gradient \(g_0\) 
(i.e., uniformly weighted average of all task gradients)
throughout the entire training often lead to sub-optimal performance~\citep{liu2021conflict}.

One of the primary reasons for this challenge is the potential conflicts between task gradients
(i.e., gradients pointing in opposing directions) 
which can impede the concurrent optimization of multiple losses~\citep{yu2020gradient}.
These conflicts often hinder convergence and negatively impact overall performance.
Recent research efforts have focused on optimization strategies that balance task gradients and/or resolve conflicts 
via computing and utilizing a dynamic gradient update vector~\(d\) at each optimization step.

A foundational approach in this direction is Multiple-Gradient Descent Algorithm (MGDA)~\citep{desideri2012multiple},
originally proposed to address Multi-Objective Optimization (MOO).
\citet{sener2018multi} apply MGDA specifically for MTL. 
FAMO~\citep{NEURIPS2023_b2fe1ee8} offers an efficient solution (constant space and time) for the log of MGDA objective. %
However, it may trade off some performance for speed, 
particularly when compared with methods like NashMTL~\citep{navon2022multi} 
and IMTL-G~\citep{liu2021towards}. 
Additionally, approaches such as MGDA and FAMO only guarantee finding Pareto-stationary points%
\footnote{
\label{ft:pareto}
    A Pareto-stationary solution is where no task's loss can be reduced without increasing at least one other task's loss,
    achieved when no descent direction improves all tasks simultaneously.
}
rather than truly optimal solutions.

Other prior works address this by imposing directional constraints on the update vector. 
CAGrad~\cite{liu2021conflict}, for instance, seeks the update vector within a Euclidean ball centered at \(g_0\), 
while SDMGrad~\cite{xiao2024direction} restricts the update direction to be near \(g_0\) through inner-product regularization. 
PCGrad~\citep{yu2020gradient} takes a different approach by directly manipulating the gradients,
projecting one task's gradient onto the normal plane of the others to avoid conflicts. 
Despite their successes, these methods often lack flexibility, as they rely on strict directional constraints or computationally expensive operations, 
or, their stochastic iterative nature limits their applicability in scenarios where a clear, interpretable optimization process is crucial. 

In this work, we propose \textsc{ConicGrad}, a novel multi-task optimization framework designed to address key limitations of existing approaches.
Similar to some prior methods (e.g., \citep{liu2021conflict,xiao2024direction}), 
\textsc{ConicGrad} leverages the reference objective gradient \(g_0\) to ensure alignment with the optimization goal, enabling convergence to an optimum. %
However, unlike these methods, \textsc{ConicGrad} enforces an angular constraint to ensure effective alignment 
without overly restricting either the update direction or magnitude.
Our key contributions are as follows: 
\begin{itemize}[leftmargin=*] 
    \item We formulate \textsc{ConicGrad} and provide a clear geometric interpretation for it, demonstrating how its angular constraint offers greater flexibility compared to the existing, more restrictive methods. 
    \item \textsc{ConicGrad} offers a computationally efficient approach to gradient updates, avoiding the overhead of costly computations or multi-step optimization processes while maintaining high performance.
    \item \textsc{ConicGrad} converges faster (in training steps) than existing methods, demonstrated by benchmark toy example and real-world experiments.
    We also support these empirical findings by theoretical convergence guarantees. 
\end{itemize}

\begin{figure*}[t]
    \centering
    \includegraphics[width=\linewidth]{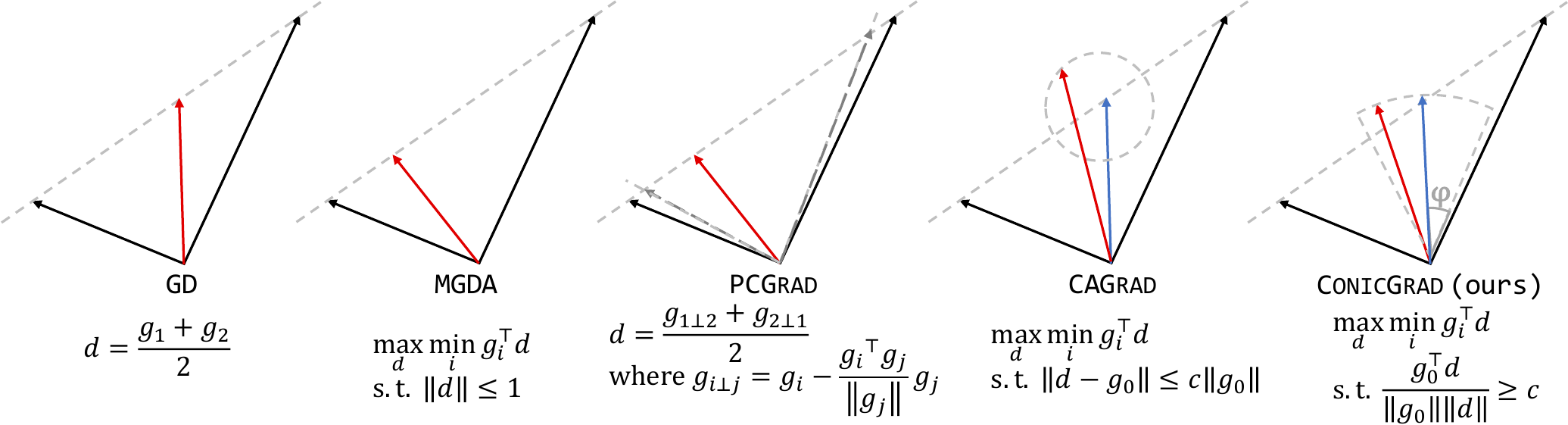}
    \caption{
    \textbf{Visual Illustration of Update Vectors.}
        Inspired by~\citep{liu2021conflict}, we illustrate the update vector \(d\) (in \textcolor{myred}{red}) for a two-task learning problem 
        using various gradient descent methods: GD, MGDA, PCGrad, CAGrad, and \textsc{ConicGrad}.
        Task-specific gradients \(g_1\) and \(g_2\) are in black and the reference objective gradient \(g_0\) is in \textcolor{myblue}{blue}.
        PCGrad projects each gradient onto the plane orthogonal to the other (dashed arrows) and averages the projections.
        CAGrad determines \(d\) by maximizing the minimum improvement across both tasks within a constrained region around the reference gradient \(g_0\).
        \textsc{ConicGrad} determines \(d\) by constraining the update direction to lie within a cone centered around \(g_0\) with an angle at most \(\varphi=\arccos(c)\), 
        ensuring alignment while allowing more flexibility.
    }
    \label{fig:vectors}
\end{figure*}

\section{Preliminaries}
\label{sec:relatedwork}

Consider a multi-task model parameterized by \(\theta \in \mathbb{R}^{M}\) with number of tasks \(K \geq 2\).
Each task has its own objective function, or loss, \(\mathcal{L}_{i}(\theta)\). 
A trivial / reference objective in MTL is optimizing for the uniform average over all the losses, i.e.,

\begin{equation}
    \label{eq:mtleq}
    \theta^{*} = \underset{\theta \in \mathbb{R}^{M}}\arg\min \{\mathcal{L}_{0}(\theta) = \frac{1}{K}\sum_{i=1}^{K}\mathcal{L}_{i}(\theta)\}.
\end{equation}
The parameters can then be updated as \(\theta' \leftarrow \theta - \eta g_0\),
where \(g_0~=~\frac{1}{K} \sum_{i=1}^{K} \nabla_{\theta} \mathcal{L}_{i}\) is the gradient of the reference objective in \cref{eq:mtleq}
and \(\eta\) is the learning rate.
This approach however, is known to be sub-optimal, due to the potential conflicts between task gradients that may occur during training~\citep{liu2021conflict}.

To address these conflicts, we aim to find an alternative update vector \(d\)
that not only decreases the average loss \(\mathcal{L}_{0}\), but also every individual loss.
This can be framed as maximizing the minimum decrease rate across all tasks:
\begin{equation}
\label{eq:mooobj}
    \underset{d \in \mathbb{R}^{M}}\max\underset{i \in [K]}\min \left\{\frac{1}{\eta}(\mathcal{L}_{i}(\theta) - \mathcal{L}_{i}(\theta - \eta d))\right\} \approx \underset{d \in \mathbb{R}^{M}}\max \underset{i \in [K]} \min \langle g_{i},d\rangle,
\end{equation}
where the approximation relies on a first-order Taylor approximation, 
which is accurate when \(\eta\) is small, as is often the case~\cite{liu2021conflict}.

The optimization problem expressed in~\cref{eq:mooobj} can be subjected to various constraints, 
such as \( \norm{d} \leq 1 \) in MGDA~\cite{sener2018multi} and FAMO~\cite{NEURIPS2023_b2fe1ee8}, 
or \( \norm{d-g_0} \leq c \norm{g_0} \) in CAGrad~\cite{liu2021conflict},
which indirectly controls the alignment between \(d\) and \(g_0\) by limiting the deviation in Euclidean space.
As mentioned earlier, 
in the case of the former constraint, the respective algorithms can only reach a Pareto-stationary point,
while for the constraints such as the latter (that incorporate the reference objective gradient \(g_0\)),
there exist guarantees that the algorithm can converge to an optimum.

\section{Multi-Task Learning with \textsc{ConicGrad}}
\label{sec:method}

Our goal is to dynamically compute a gradient update vector \(d\) at each optimization step.
This vector should balance task gradients and mitigate their potential conflicts, 
while ensuring convergence towards the optimum of the reference objective in \cref{eq:mtleq}. 
To this end, we propose \textsc{ConicGrad}, which enforces an angular constraint that restricts \(d\) within a cone centered around \(g_0\). 
Formally, this can be expressed as the following constrained optimization problem
\begin{equation}
\label{eq:org_eq}
    \max\limits_{d \in \mathbb{R}^{M}} \min\limits_{i \in [K]} \langle g_{i},d\rangle \quad s.t. \quad \frac{\langle g_{0}, d\rangle}{\norm{g_0} \norm{d}} \geq c,
\end{equation}
where \(c~\in~[-1, 1]\), and in practice we restrict it to \(c \in (0, 1]\) to avoid negative correlation (see~\cref{appxsec:geometryconic}).

This approach provides an interpretable formulation that maintains sufficient alignment with \(g_0\) 
without imposing overly rigid restrictions on the gradient update.%
\footnote{
    We provide a detailed comparison and interpretation of CAGrad and \textsc{ConicGrad} in \cref{appxsec:conic_vs_cagrad}, 
    highlighting the greater flexibility of \textsc{ConicGrad}'s angular constraint. 
}
Specifically, the advantages include:
(i)~explicit control over the update direction, ensuring that \(d\) remains geometrically aligned with the reference objective gradient \(g_0\); and, 
(ii)~decoupling of magnitude and direction, unlike common directional constraints, which allows \(\norm{d}\) to adopt any task-specific criteria.

To convert~\cref{eq:org_eq} to an unconstrained optimization problem, the Lagrangian in~\cref{eq:org_eq} is
\begin{equation}
\label{eq:primal}
    \max\limits_{d \in \mathbb{R}^{M}} \min\limits_{i \in [K]} \langle g_{i},d\rangle - \frac{\lambda}{2} (c^2 \norm{g_{0}}^2 \norm{d}^2 - \norm{g_0^\top d}^2).
\end{equation}
Note that \(\min\limits_{i \in [K]} \langle g_{i},d\rangle = \min\limits_{\omega \in \mathcal{W}} \langle g_{\omega},d\rangle\), 
where \(g_{\omega} = \sum_{i=1}^K \omega_{i} g_{i}\)
and \(\omega\) is on the probability simplex, i.e., \(\forall i \quad \omega_{i} \geq 0\) and \(\sum_{i=1}^K \omega_{i} = 1\). 
The objective is concave in \(d\), and we find that when \(c < 1\) Slater condition holds (see proof in~\cref{appxprop:slatercond}). We get strong duality since the duality gap is \(0\), and swap the \( \max \) and \( \min \) operators. The dual objective of the primal problem in~\cref{eq:primal} is
\begin{equation}
\label{eq:obj}
    \min_{\substack{\lambda \ge 0 \\ \omega \in \mathcal{W}}} \max\limits_{d \in \mathbb{R}^K} g_\omega^\top d - \frac{\lambda}{2} (c^2 \norm{g_{0}}^2 \norm{d}^2 - \norm{g_0^\top d}^2).
\end{equation}


\begin{proposition}
Given the optimization problem in~\cref{eq:org_eq}, its Lagrangian in~\cref{eq:primal}, and assuming the Slater condition holds, the dual of the primal problem in~\cref{eq:obj}, the optimal gradient update \(d^{*}\) is given by
\begin{equation}
\label{eq:dstar_lmbda}
    d^{*} = \frac{1}{\lambda}\left(c^{2}\norm{g_{0}}^{2}\mathbb{I} - g_{0}g_{0}^\top\right)^{-1} g_{\omega},
\end{equation}
where \(\mathbb{I}\) is a \(M \times M\) identity matrix.
\begin{proof}
Due to space limitations, derivation of the optimal \(d^{*}\) is provided in~\cref{appxderiv:proofsconicgrad}.
\end{proof}
\end{proposition}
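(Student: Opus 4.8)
The plan is to exploit the strong duality already established in the excerpt: since the duality gap is zero, it suffices to solve the inner unconstrained maximization over $d$ in~\cref{eq:obj} for fixed dual variables $\lambda \ge 0$ and $\omega \in \mathcal{W}$, and then read off $d^{*}$ from the first-order stationarity condition. The inner objective
\[
    F(d) = g_{\omega}^{\top} d - \frac{\lambda}{2}\bigl(c^{2}\norm{g_0}^{2}\norm{d}^{2} - \norm{g_0^{\top} d}^{2}\bigr)
\]
is a quadratic-plus-linear function of $d$, so its stationary point is obtained by setting $\nabla_d F = 0$ and solving the resulting linear system.

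First I would rewrite the two squared terms in bilinear form, $\norm{d}^{2} = d^{\top} d$ and $\norm{g_0^{\top}d}^{2} = d^{\top} g_0 g_0^{\top} d$, so that matrix calculus applies directly. Using $\nabla_d (g_\omega^{\top} d) = g_\omega$, $\nabla_d (d^{\top} d) = 2d$, and $\nabla_d (d^{\top} g_0 g_0^{\top} d) = 2 g_0 g_0^{\top} d$ (the last because $g_0 g_0^{\top}$ is symmetric), the gradient collapses to
\[
    \nabla_d F = g_{\omega} - \lambda\bigl(c^{2}\norm{g_0}^{2}\mathbb{I} - g_0 g_0^{\top}\bigr) d .
\]
Setting this to zero yields the linear system $\lambda\bigl(c^{2}\norm{g_0}^{2}\mathbb{I} - g_0 g_0^{\top}\bigr) d = g_{\omega}$, and left-multiplying by the inverse of the bracketed matrix gives exactly~\cref{eq:dstar_lmbda}.

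The step that needs care — and the main obstacle — is justifying that the matrix $A = c^{2}\norm{g_0}^{2}\mathbb{I} - g_0 g_0^{\top}$ is invertible and that the stationary point is the intended optimizer. I would diagonalize $A$ on the orthogonal decomposition $\mathbb{R}^{M} = \operatorname{span}(g_0) \oplus g_0^{\perp}$: every $v \perp g_0$ satisfies $A v = c^{2}\norm{g_0}^{2} v$, while $A g_0 = (c^{2}-1)\norm{g_0}^{2} g_0$. Hence $A$ has eigenvalue $c^{2}\norm{g_0}^{2}$ with multiplicity $M-1$ and eigenvalue $(c^{2}-1)\norm{g_0}^{2}$ with multiplicity $1$, both nonzero precisely when $c \in (0,1)$ — this is exactly where the Slater-enforcing restriction $c<1$ enters, and where $c=1$ (which collapses the cone onto the ray through $g_0$) must be excluded. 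Since the $g_0$-direction eigenvalue is negative for $c<1$, $A$ is \emph{indefinite}, so $F$ is not globally concave and the critical point is a saddle of the Lagrangian rather than a naive interior maximum over all of $\mathbb{R}^M$; the delicate part is therefore to argue, via the KKT conditions together with the outer minimization over $(\lambda,\omega)$ and complementary slackness (the angular constraint being active at the optimum), that this stationarity point is indeed the sought $d^{*}$. As a practical aside, the inverse in~\cref{eq:dstar_lmbda} admits a closed form through the Sherman--Morrison identity, which also underpins the paper's efficiency claim.
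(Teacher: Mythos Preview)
Your approach is essentially identical to the paper's: both simply differentiate the inner Lagrangian $F(d)$ with respect to $d$ at fixed $(\lambda,\omega)$, set the gradient to zero, and invert the resulting linear system to obtain~\cref{eq:dstar_lmbda}. The paper's appendix proof is in fact more cursory than yours --- it writes down the derivative and rearranges, without discussing invertibility of $c^{2}\norm{g_0}^{2}\mathbb{I} - g_0 g_0^{\top}$ or the second-order nature of the stationary point.

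Your additional eigenvalue analysis is a genuine strengthening: the paper asserts in passing that ``the objective is concave in $d$'', but your observation that the $g_0$-direction eigenvalue $(c^{2}-1)\norm{g_0}^{2}$ is negative for $c<1$ shows this concavity claim is false as stated, and that the stationary point is a saddle of $F$ over $\mathbb{R}^M$. The paper does not address this, so your caveat about needing the KKT/complementary-slackness argument to pin down $d^{*}$ is well placed rather than an oversight on your part.
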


\cref{eq:obj} is now simplified to 
\begin{equation}
\label{eq:obj_without_max}
    \min_{\substack{\lambda \ge 0 \\ \omega \in \mathcal{W}}} g_\omega^\top d^{*} - \frac{\lambda}{2} (c^2 \norm{g_{0}}^2 \norm{d^{*}}^2 - \norm{g_0^\top d^{*}}^2).
\end{equation}
We empirically find that \(\lambda~=~1\) works well in practice
(see \cref{app:lambda_opt} for a detailed derivation and discussion), 
and therefore, \cref{eq:dstar_lmbda} is simplified to
\begin{equation}
\label{eq:dstar}
    d^{*} = \big( c^2\norm{g_0}^2\mathbb{I} - g_0 g_0^\top \big)^{-1} g_{\omega}.
\end{equation}
\cref{eq:dstar} provides a closed form expression for computing \(d^{*}\) based on \(g_0\) and \(g_\omega\). 
Since \(d^{*}\) is now analytically found, 
the objective in \cref{eq:obj_without_max} simplifies to an optimization problem only dependent on \(\omega \in \mathcal{W}\)
\begin{equation}
\label{eq:obj2opt}
    \min_{\omega \in \mathcal{W}} g_{w}^\top d^{*} - \frac{1}{2} (c^2 \norm{g_{0}}^2 \norm{d^{*}}^2 - \norm{g_0^\top d^{*}}^2).
\end{equation}

\cref{eq:obj2opt} is a well-shaped optimization problem over \(\omega\),
which can be solved using any standard optimization algorithm. 
Once the optimal \(\omega\) is determined 
(or a step towards it is taken by an iterative optimization method), 
we reuse~\cref{eq:dstar} to derive the final gradient update with the updated \(g_{\omega}\),
and then update the model parameters \(\theta\) accordingly. 

We provide the algorithm for \textsc{ConicGrad} in \cref{alg:algo}.

\paragraph{Normalization for Stability.}  
To ensure that the gradient update remains stable, 
in this work, we scale \(d^{*}\) such that its norm equals that of \(g_{0}\). 
Let \(\Tilde{d}\) denote the final gradient update vector, then we get our final gradient update as
\(
    \Tilde{d}~=~d^{*}\frac{\norm{g_{0}}}{\norm{d^{*}}}
\).
This scaling is merely a design choice.
Since \textsc{ConicGrad}'s constraint decouples direction and magnitude of the update vector,
it accommodates adopting other task-specific criteria/heuristics for such scaling.

\subsection{Efficient Computation of \(d^{*}\)}
\label{sec:smw}
Note that \cref{eq:dstar} requires inverting an \(M\times M\) matrix 
(where \(M\) is the number of model parameters, which can be huge) 
and computing it may be impractical. 
However, due to its specific structure, it is possible to use the Sherman-Morrison-Woodbury (SMW) formula~\citep{higham2002accuracy} 
(also known as the matrix inversion lemma) 
to reformulate it as a \(1\times 1\) (i.e., scalar) inversion instead. 

For the sake of brevity,
let \(\mathbf{Z}\) denote \(c^2\norm{g_0}^2\mathbb{I} - g_0 g_0^\top\),
where \(g_0\) and \(g_\omega\) are both vectors \(\in \mathbb{R}^{M \times 1}\), 
and \(\mathbb{I}\) is the \(M \times M\) identity matrix.
Note that the term \(g_0 g_0^\top\) is low rank (in fact rank-1). 
Therefore, \(\mathbf{Z}\) can be interpreted as a rank-1 perturbation of a scaled identity matrix \(c^2 \|g_0\|^2 \mathbb{I}\). 
This is a well-suited setting for the SMW formula, which states
\begin{equation}
    (A + UCV)^{-1} = A^{-1} - A^{-1} U (C^{-1} + V A^{-1} U)^{-1} V A^{-1}.
\end{equation}
Here we have \( A~=~c^2 \norm{g_0}^2 \mathbb{I} \) (diagonal) 
and \( C~=~-1 \), \( U~=~g_0 \), and \( V~=~g_0^\top \).
We first compute \( A^{-1}~=~\frac{1}{c^2 \|g_0\|^2} \mathbb{I}\).
Then substitute \( U~=~V~=~g_0 \) and \( A^{-1} \) to form 
\begin{equation}
    C^{-1}~+~V~A^{-1}~U = -\mathbb{I}~+~\frac{1}{c^2 \|g_0\|^2}~g_0^\top~g_0.
\end{equation}
This yields an scalar, which we invert as
\begin{equation}
    D = \left( -\mathbb{I} + \frac{1}{c^2 \|g_0\|^2} g_0^\top g_0 \right)^{-1}.
\end{equation}
Next, we construct the final inverse using SMW as
\begin{equation}
    \mathbf{Z}^{-1} = \frac{1}{c^2 \|g_0\|^2} \mathbb{I} - \frac{1}{c^4 \|g_0\|^4} g_0 D g_0^\top.
\end{equation}

We can further improve efficiency by computing \( d^{*}~=~\mathbf{Z}^{-1} g_\omega \) without explicitly forming \(\mathbf{Z}^{-1}\).
First compute \(u~=~g_0^\top g_\omega\) and \(v~=~D u\), then we can write
\begin{equation}
    \mathbf{Z}^{-1} g_\omega = \frac{1}{c^2 \|g_0\|^2} g_\omega - \frac{1}{c^4 \|g_0\|^4} g_0 v.
\end{equation}

Hence, using the SMW formula provides us with a closed-form, computationally efficient, and numerically stable~\cite{higham2002accuracy} solution to compute \(d^{*}\) in \cref{eq:dstar}.

\renewcommand{\arraystretch}{0.9}
\begin{algorithm}[tb]
    \caption{\textsc{ConicGrad}}
    \label{alg:algo}
    \begin{algorithmic}[1]
        \STATE {\bfseries Input:} 
            Initial model parameters $\theta_0$, 
            Differentiable task losses $\{L_i\}_{i=1}^{K}$, 
            Learning rates $\eta_1$ and $\eta_2$,
            Decay $\gamma$,
            Cosine similarity constraint $c$.
        \STATE Initialize uniform weights: $\forall i: \omega_i = \frac{1}{K}$
        \FOR{$t=1$ {\bfseries to} $T$}
            \STATE Compute $g_0 = \sum_{i=1}^{K} \nabla_\theta L_i$
            \STATE Compute $g_\omega = \sum_{i=1}^{K} \omega_i \nabla_\theta L_i$
            \STATE Compute $d^* = (c^2\norm{g_0}^2\mathbb{I} - g_0 g_0^\top)^{-1} g_\omega$ 
            \STATE Update the weights $\omega_{t+1}$ using $\eta_2$ via \cref{eq:obj2opt}
            \STATE Recompute $d^{*}$ based on $\omega_{t+1}$
            \STATE Update the model parameters $\theta_{t+1}$ using $\eta_1$
        \ENDFOR
    \end{algorithmic}
\end{algorithm}

\subsection{Convergence Analysis}
We analyze the convergence property of \textsc{ConicGrad}.

\begin{theorem}
Assume individual loss functions \(L_{0}, L_{1}, \cdots L_{K}\) are differentiable on \(\mathbb{R}^{M}\) and their gradients \(\nabla L_{i}(\theta)\) are all \(L\text{-Lipschitz}\) with \(L > 0\), 
i.e., \(\norm{\nabla L_{i}(x) - \nabla L_{i}(y)} \leq L\norm{x - y}\) for \(i~=~0,1,\cdots,K\), where \(L \in (0, \infty)\), and \(L_{0}\) is bounded from below i.e., \(L^{*}_{0}~=~\inf_{\theta \in \mathbb{R}^{m}} L_{0}(\theta) > -\infty\). Then, with a fixed step-size \(\alpha\) satisfying \(0 < \alpha < \frac{1}{L}\), and in case of \(-1 \leq c \leq 1\), \textsc{ConicGrad} satisfies the inequality
\begin{equation}
    \label{eq:rateconverge}
    \sum_{t=0}^{\top}\norm{g_{0}(\theta)}^{2} \leq \frac{2(L_{0}(0)-L_{0}^{*})}{\alpha(2\kappa c-1)(T+1)}.
\end{equation}
\begin{proof}
The proof is provided in~\cref{appx:cagradcongvergence}.
\end{proof}
\end{theorem}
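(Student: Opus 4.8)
The plan is to follow the standard descent-lemma argument for smooth nonconvex optimization, specialized to the ConicGrad update. First I would invoke the $L$-smoothness of the reference objective $L_0$: since $\nabla L_0$ is $L$-Lipschitz, the descent lemma gives, for the update $\theta_{t+1} = \theta_t - \alpha \tilde d_t$,
\begin{equation*}
    L_0(\theta_{t+1}) \leq L_0(\theta_t) - \alpha \langle g_0, \tilde d_t\rangle + \frac{L\alpha^2}{2}\norm{\tilde d_t}^2,
\end{equation*}
where $g_0 = \nabla L_0(\theta_t)$ and $\tilde d_t$ is the scaled ConicGrad direction of \cref{eq:dstar}.

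The crux is to control the two inner terms. The normalization $\tilde d = d^{*}\norm{g_0}/\norm{d^{*}}$ fixes $\norm{\tilde d_t} = \norm{g_0}$, which immediately bounds the quadratic term. For the linear term I would use the angular constraint of \cref{eq:org_eq}: since $d^{*}$ is feasible, $\langle g_0, d^{*}\rangle \geq c\,\norm{g_0}\norm{d^{*}}$, and because $\tilde d$ is a positive rescaling of $d^{*}$ the cosine is preserved, so $\langle g_0, \tilde d_t\rangle \geq \kappa c\,\norm{g_0}^2$ for the appropriate constant $\kappa$ that records how the ConicGrad direction $d^{*}$ (assembled from the weighted gradient $g_\omega$ via the matrix $\mathbf{Z} = c^2\norm{g_0}^2\mathbb{I} - g_0 g_0^\top$) aligns with $g_0$. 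This is the step where the geometry of the cone enters the bound.

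Combining the two estimates and using $\alpha < 1/L$ (so that $L\alpha < 1$) to absorb the quadratic term into the linear one yields a per-step decrease
\begin{equation*}
    L_0(\theta_t) - L_0(\theta_{t+1}) \geq \frac{\alpha}{2}(2\kappa c - 1)\norm{g_0(\theta_t)}^2.
\end{equation*}
Summing this telescoping inequality over $t = 0,\dots,T$ and using that $L_0$ is bounded below by $L_0^{*}$ gives $\frac{\alpha}{2}(2\kappa c - 1)\sum_{t=0}^{T}\norm{g_0(\theta_t)}^2 \leq L_0(\theta_0) - L_0^{*}$; rearranging and normalizing by $(T+1)$ produces the stated $\mathcal{O}(1/T)$ rate in \cref{eq:rateconverge}.

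The main obstacle is establishing the linear-term lower bound with the correct constant and guaranteeing $2\kappa c - 1 > 0$. Because $d^{*}$ is built from $g_\omega$ rather than from $g_0$ directly, and because $\mathbf{Z}$ is indefinite when $c < 1$ (its eigenvalue along $g_0$ is $(c^2-1)\norm{g_0}^2 < 0$ while orthogonal eigenvalues are $c^2\norm{g_0}^2 > 0$), I would have to track carefully how the inversion in \cref{eq:dstar} tilts $d^{*}$ relative to $g_0$, verifying that the angular constraint is active at the optimum so the cosine bound holds tightly. Pinning down $\kappa$ and the admissible range of $c$ for which the coefficient stays positive — consistent with the cone half-angle $\varphi = \arccos(c)$ — is the delicate part; the remaining telescoping and rate extraction are routine.
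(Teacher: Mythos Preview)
Your proposal is correct and follows essentially the same route as the paper's proof: apply the descent lemma from $L$-smoothness of $L_0$, use the angular constraint of \cref{eq:org_eq} to lower-bound $\langle g_0,\tilde d\rangle$ by $c\,\|g_0\|\,\|\tilde d\|$, invoke the normalization $\|\tilde d\|\approx\kappa\|g_0\|$ together with $\alpha<1/L$ to collapse the quadratic term, and telescope. The paper likewise introduces $\kappa$ as the proportionality constant from the normalization step and does not further justify positivity of $2\kappa c-1$ or activity of the constraint, so the concerns you raise in your final paragraph actually go beyond what the paper itself establishes.
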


\subsection{Geometric Intuition of \textsc{ConicGrad}}
\label{appxsec:conic_vs_cagrad}
The angular constraint in \textsc{ConicGrad} provides greater flexibility in selecting the gradient update vector \(d\) compared to the directional constraint of CAGrad~\citep{liu2021conflict}. 
To illustrate this, we present a geometric interpretation in a toy \(2\)D vector space, 
demonstrating how \textsc{ConicGrad} permits a broader range of feasible directions while maintaining alignment with the reference objective gradient \(g_0\).
As visualized in \cref{fig:interpretingconicgrad}, \colorbox{conicgrad}{\textsc{ConicGrad}} and \colorbox{cagrad}{CAGrad} define distinct feasible regions. 
For both methods, we set \(c = 0.5\) and assume the main objective gradient vector to be \(g_0 = (1,0)\). 
We observe that CAGrad's constraint defines a Euclidean ball of radius \(0.5\) centered on \(g_0\), limiting feasible updates based on their distance from \(g_0\). 
In contrast, \textsc{ConicGrad} allows any vector within a conic section around \(g_0\), as long as the angle between \(d\) and \(g_0\) does not exceed \(60^\circ\). 
This angular constraint is geometrically intuitive, as it permits a broader range of feasible directions while maintaining alignment with \(g_0\).

\begin{figure}[t]
    \centering
    \includegraphics[width=0.4\textwidth]{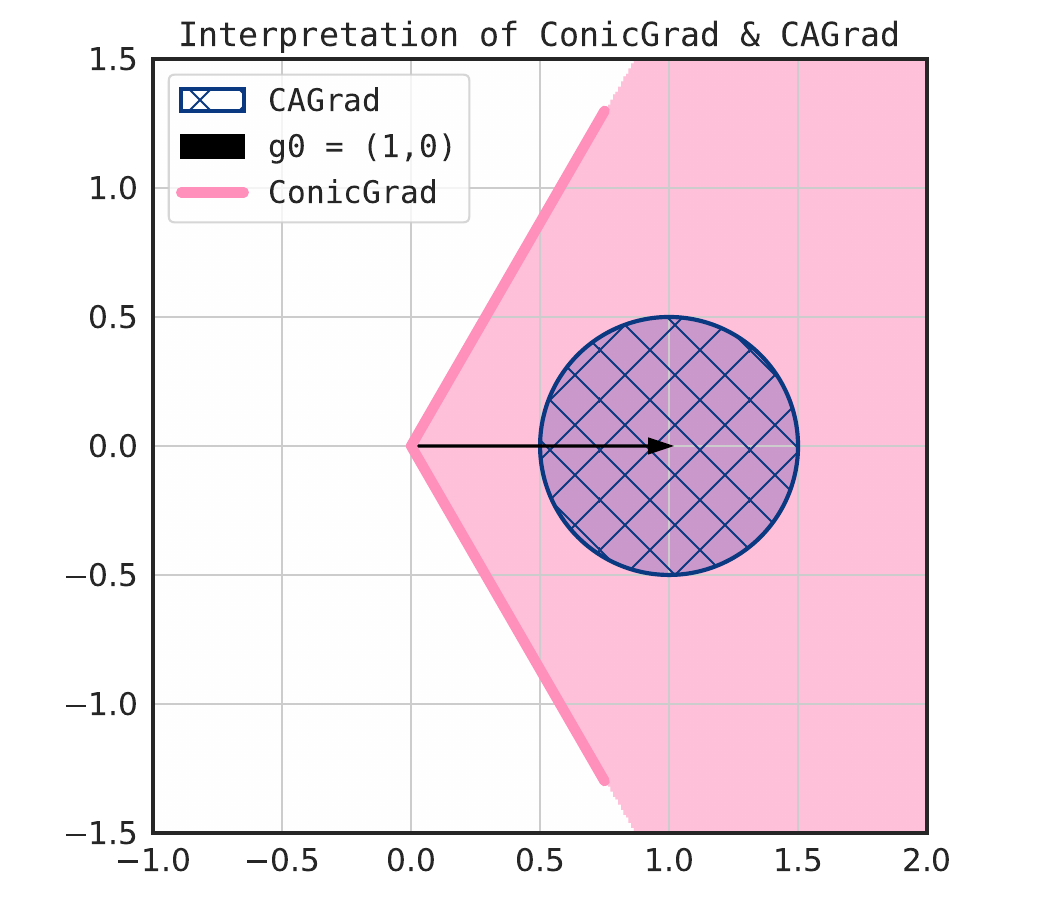} 
    \caption{\textbf{Visualizing Conic vs. Directional Constraints.} 
    We visualize \textsc{ConicGrad} and CAGrad~\citep{liu2021conflict} constraints in a toy setup. The x and y axes denote all possible direction vectors in \(2\)D space \(\mathbb{R}^{2}\), and the plot indicates which vectors in this space satisfy \colorbox{conicgrad}{\textsc{ConicGrad}} and \colorbox{cagrad}{CAGrad} constraints.
    }
    \label{fig:interpretingconicgrad}
\end{figure}

\section{Experimental Results and Discussions}
\label{sec:results}

\begin{table*}[t]
\centering
\caption{\textbf{Results on NYUv2 (3 tasks).} We repeat each experiment over \(3\) different seeds and report the average results. \(\textbf{MR}\) and \textbf{\(\Delta\) m\%} are main MTL metrics.}
\label{tab:nyuv2}
\scalebox{0.8}{
\begin{tabular}{@{}lccccccccccc@{}}
\toprule
 & \multicolumn{2}{c}{Segmentation} & \multicolumn{2}{c}{Depth} & \multicolumn{5}{c}{Surface Normal} &  &  \\ \cmidrule(lr){2-10}
\textbf{Method} & mIoU \(\uparrow\) & Pix Acc \(\uparrow\) & Abs Err \(\downarrow\) & Rel Err \(\downarrow\) & \multicolumn{2}{c}{Angle Dist \(\downarrow\)} & \multicolumn{3}{c}{Within \(t^{\circ}\) \(\uparrow\)} & \textbf{MR} \(\downarrow\) & \textbf{\(\Delta\) m\%} \(\downarrow\) \\
 &  &  &  &  & Mean & Median & 11.25 & 22.5 & 30 &  &  \\ \midrule
STL & \(38.30\) & \(63.76\) & \(0.6754\) & \(0.2780\) & \(25.01\) & \(19.21\) & \(30.14\) & \(57.20\) & \(69.15\) &  &  \\ \midrule
LS & \(39.29\) & \(65.33\) & \(0.5493\) & \(0.2263\) & \(28.15\) & \(23.96\) & \(22.09\) & \(47.50\) & \(61.08\) & \(10.67\) & \(5.59\) \\
SI & \(38.45\) & \(64.27\) & \(0.5354\) & \(0.2201\) & \(27.60\) & \(23.37\) & \(22.53\) & \(48.57\) & \(62.32\) & \(9.89\) & \(4.39\) \\
RLW~\citep{lin2021reasonable} & \(37.17\) & \(63.77\) & \(0.5759\) & \(0.2410\) & \(28.27\) & \(24.18\) & \(22.26\) & \(47.05\) & \(60.62\) & \(13.22\) & \(7.78\) \\
DWA~\citep{liu2019end} & \(39.11\) & \(65.31\) & \(0.5510\) & \(0.2285\) & \(27.61\) & \(23.18\) & \(24.17\) & \(50.18\) & \(62.39\) & \(9.44\) & \(3.57\) \\
UW~\citep{kendall2018multi} & \(36.87\) & \(63.17\) & \(0.5446\) & \(0.2260\) & \(27.04\) & \(22.61\) & \(23.54\) & \(49.05\) & \(63.65\) & \(9.44\) & \(4.05\) \\
MGDA~\citep{sener2018multi} & \(30.47\) & \(59.90\) & \(0.6070\) & \(0.2555\) & \(24.88\) & \(19.45\) & \(29.18\) & \(56.88\) & \(69.36\) & \(7.44\) & \(1.38\) \\
PCGrad~\citep{yu2020gradient} & \(38.06\) & \(64.64\) & \(0.5550\) & \(0.2325\) & \(27.41\) & \(22.80\) & \(23.86\) & \(49.83\) & \(63.14\) & \(10.0\) & \(3.97\) \\
GradDrop~\citep{chen2020just} & \(39.39\) & \(65.12\) & \(0.5455\) & \(0.2279\) & \(27.48\) & \(22.96\) & \(23.38\) & \(49.44\) & \(62.87\) &\(8.89\) & \(3.58\) \\
CAGrad~\citep{liu2021conflict} & \(39.79\) & \(65.49\) & \(0.5486\) & \(0.2250\) & \(26.31\) & \(21.58\) & \(25.61\) & \(52.36\) & \(65.58\) & \(6.33\) & \(0.20\) \\
IMTL-G~\citep{liu2021towards} & \(39.35\) & \(65.60\) & \(0.5426\) & \(0.2256\) & \(26.02\) & \(21.19\) & \(26.20\) & \(53.13\) & \(66.24\) & \(5.56\) & \(-0.76\) \\
NashMTL~\cite{navon2022multi} & \(40.13\) & \(65.93\) & \(0.5261\) & \(0.2171\) & \(25.26\) & \(20.08\) & \(28.40\) & \(55.47\) & \(68.15\) & \(3.67\) & \(-4.04\) \\
FAMO~\citep{NEURIPS2023_b2fe1ee8} & \(38.88\) & \(64.90\) & \(0.5474\) & \(0.2194\) & \(25.06\) & \(19.57\) & \(29.21\) & \(56.61\) & \(68.98\) & \(4.78\) & \(-4.10\) \\ 
SDMGrad~\citep{xiao2024direction} & \(\mathbf{40.47}\) & \(\mathbf{65.90}\) & \(\mathbf{0.5225}\) & \(\mathbf{0.2084}\) & \(25.07\) & \(19.99\) & \(28.54\) & \(55.74\) & \(68.53\) & \(\mathbf{2.78}\) & \(-4.84\) \\
\midrule
\rowcolor{conicgrad} \textbf{\textsc{ConicGrad}} & \(38.67\) & \(65.25\) & \(0.5272\) & \(0.2170\) & \(\mathbf{24.70}\) & \(\mathbf{19.37}\) & \(\mathbf{29.58}\) & \(\mathbf{57.09}\) & \(\mathbf{69.56}\) & \(2.89\) & \(\mathbf{-5.13}\) \\ \bottomrule
\end{tabular}
}

\end{table*}

In this section, we present a comprehensive evaluation of \textsc{ConicGrad} across several standard MTL benchmarks, 
report its performance, compare it to the existing methods, and analyze the results in detail.
We consider two commonly used metrics for evaluating MTL methods~\citep{NEURIPS2023_b2fe1ee8,navon2022multi}: 
(i)~\(\textbf{\(\Delta\) m\%}\) which measures the average per-task performance drop of a method relative to the single task baseline (STL), i.e.,
    \begin{equation*}
        \textbf{\(\Delta\) m\%} = \frac{1}{K}\sum_{k=1}^{K}(-1)^{\delta_{k}}\frac{(M_{m,k}-M_{STL,k})}{M_{STL,k}} \times 100,
    \end{equation*}
    where \(M_{STL,k}\) refers to the value of STL baseline for some metric \(M\) of task \(k\), while \(M_{m,k}\) denotes the value of the method being evaluated for the same metric, 
    and \(\delta_{k}\) is a binary indicator if a metric is better when higher (\(\delta_{k} = 1)\) or lower (\(\delta_{k} = 0)\).
(ii)~Mean Rank (\textbf{MR}) which measures the average rank of each method across different tasks (e.g., \(\textbf{MR} = 1\) when the method ranks first for every task).

\subsection{Toy Example}
\label{sec:toyres}
Given the standard practices in MTL evaluation, we assess \textsc{ConicGrad} on a toy 2-task example~\citep{liu2021conflict}.
This setup consists of two competing objectives that define the overall objective \(\frac{1}{2} \big(\mathcal{L}_1(\theta) + \mathcal{L}_2(\theta)\big)\),
mimicking scenarios where optimization methods must balance conflicting gradients effectively in order to reach the global minimum. 
Failure to do so often results in getting stuck in either of the two suboptimal local minima.
More details on the setup is provided in~\cref{app:toy_detail}.

Using five commonly studied initialization points 
\(\theta_{\text{init}} = \{(-8.5,7.5), (-8.5,-5), (9,9), (-7.5,-0.5), (9,-1)\}\), 
we compare \textsc{ConicGrad} with the following leading methods:
FAMO~\citep{NEURIPS2023_b2fe1ee8}, 
CAGrad~\cite{liu2021conflict}, 
and NashMTL~\citep{navon2022multi}. 
\cref{fig:toyexample} visualizes each method's optimization trajectories, 
illustrating how they handle conflicts in task gradients, as well as their final optimization outcomes.

The results highlight key differences among the methods. 
NashMTL struggles with two initialization points near the two local minima. 
FAMO consistently converges to the Pareto front, but cannot achieve the global minimum. 
This can be explained by its lack of an aligning mechanism with the reference objective gradient \(g_{0}\).
In contrast, both CAGrad and \textsc{ConicGrad} successfully reach the global minimum (\(\scalebox{0.7}{\(\bigstar\)}\) on the Pareto front) for all initialization points. 

Notably, \textsc{ConicGrad} achieves the global minimum significantly faster than CAGrad, 
as evidenced by its respective learning curve on the far-right of \cref{fig:toyexample}.
While FAMO and \textsc{ConicGrad} demonstrate comparable speeds in reaching the Pareto front, 
only \textsc{ConicGrad} consistently converges to the global minimum.
This highlights its effectiveness in balancing objectives and optimizing the overall performance, surpassing competing methods in both convergence speed and outcome.

\subsection{Multi-Task Supervised Learning}

\begin{table*}[!t]
\centering
\caption{\textbf{Results on CityScapes (2 Tasks) and CelebA (40 Tasks).} We repeat each experiment over \(3\) different seeds and report the average results. \(\textbf{MR}\) and \textbf{\(\Delta\) m\%} are main MTL metrics.}
\label{tab:cityscapes}
\scalebox{0.9}{
\begin{tabular}{@{}lccccccccc@{}}
\toprule
 & \multicolumn{6}{c}{\textbf{CityScapes}} && \multicolumn{2}{c}{\textbf{CelebA}} \\ \cmidrule(lr){2-7} \cmidrule(lr){9-10}  
\textbf{Method} & \multicolumn{2}{c}{Segmentation} & \multicolumn{2}{c}{Depth} & \textbf{MR} \(\downarrow\) & \textbf{\(\Delta\) m\%} \(\downarrow\) && \textbf{MR} \(\downarrow\) & \textbf{\(\Delta\) m\%} \(\downarrow\) \\ \cmidrule(lr){2-5}
 & mIoU \(\uparrow\) & Pix Acc \(\uparrow\) & Abs Err \(\downarrow\) & Rel Err \(\downarrow\) &  &  &&  &  \\ \midrule
STL & \(74.01\) & \(93.16\) & \(0.0125\) & \(27.77\) &  &  &&  &  \\
\midrule
LS & \(70.95\) & \(91.73\) & \(0.0161\) & \(33.83\) & \(9.75\) & \(14.11\) && \(6.55\) & \(4.15\) \\
SI & \(70.95\) & \(91.73\) & \(0.0161\) & \(33.83\) & \(9.75\) & \(14.11\) && \(8.0\) & \(7.20\) \\
\begin{tabular}[c]{@{}c@{}}RLW~\citep{lin2021reasonable}\end{tabular} & \(74.57\) & \(93.41\) & \(0.0158\) & \(47.79\) & \(9.0\) & \(24.38\) && \(5.53\) & \(1.46\) \\
\begin{tabular}[c]{@{}c@{}}DWA~\citep{liu2019end}\end{tabular} & \(75.24\) & \(93.52\) & \(0.0160\) & \(44.37\) & \(7.25\) & \(21.45\) && \(7.2\) & \(3.20\) \\
\begin{tabular}[c]{@{}c@{}}UW~\citep{kendall2018multi}\end{tabular} & \(72.02\) & \(92.85\) & \(0.0140\) & \(\mathbf{30.13}\) & \(6.5\) & \(5.89\) && \(6.03\) & \(3.23\) \\
\begin{tabular}[c]{@{}c@{}}MGDA~\citep{sener2018multi}\end{tabular} & \(68.84\) & \(91.54\) & \(0.0309\) & \(33.50\) & \(10.25\) & \(44.14\) && \(11.03\) & \(14.85\) \\
\begin{tabular}[c]{@{}c@{}}PCGrad~\citep{yu2020gradient}\end{tabular} & \(75.13\) & \(93.48\) & \(0.0154\) & \(42.07\) & \(7.25\) & \(18.29\) && \(8.05\) & \(3.17\) \\
\begin{tabular}[c]{@{}c@{}}GradDrop~\citep{chen2020just}\end{tabular} & \(75.27\) & \(93.53\) & \(0.0157\) & \(47.54\) & \(6.5\) & \(23.73\) && \(8.05\) & \(3.29\) \\
\begin{tabular}[c]{@{}c@{}}CAGrad~\citep{liu2021conflict}\end{tabular} & \(75.16\) & \(93.48\) & \(0.0141\) & \(37.60\) & \(6.0\) & \(11.64\) && \(6.42\) & \(2.48\) \\
\begin{tabular}[c]{@{}c@{}}IMTL-G~\citep{liu2021towards}\end{tabular} & \(\mathbf{75.33}\) & \(93.49\) & \(0.0135\) & \(38.41\) & \(4.5\) & \(11.10\) && \(4.92\) & \(0.84\) \\
\begin{tabular}[c]{@{}c@{}}NashMTL~\cite{navon2022multi}\end{tabular} & \(75.41\) & \(\mathbf{93.66}\) & \(\mathbf{0.0129}\) & \(35.02\) & \(\mathbf{2.5}\) & \(6.82\) && \(5.25\) & \(2.84\) \\
\begin{tabular}[c]{@{}c@{}}FAMO~\citep{NEURIPS2023_b2fe1ee8}\end{tabular} & \(74.54\) & \(93.29\) & \(0.0145\) & \(32.59\) & \(6.25\) & \(8.13\) && \(5.03\) & \(1.21\) \\ 
\begin{tabular}[c]{@{}c@{}}SDMGrad~\citep{xiao2024direction}\end{tabular} & \(74.53\) & \(93.52\) & \(0.0137\) & \(34.01\) & \(5.5\) & \(7.79\) && N/A & N/A \\
\midrule
\rowcolor{conicgrad} \textbf{\textsc{ConicGrad}} & \(74.22\) & \(93.05\) & \(0.0133\) & \(30.99\) & \(5.5\) & \(\mathbf{4.53}\) && \(\mathbf{4.0}\) & \(\mathbf{0.10}\) \\ \bottomrule
\end{tabular}
}
\end{table*}

In the supervised MTL setting, we evaluate \textsc{ConicGrad} on three widely used benchmarks, 
namely CityScapes~\citep{cordts2016cityscapes}, CelebA~\citep{liu2015deep}, and NYUv2~\citep{silberman2012indoor}, 
following~\citep{NEURIPS2023_b2fe1ee8,xiao2024direction,navon2022multi,liu2021conflict}. 
\textbf{Cityscapes} includes two tasks: segmentation and depth estimation.
It comprises \(5000\) RGBD images of urban street scenes, each annotated with per-pixel labels. 
\textbf{NYUv2} is another vision-based dataset that involves three tasks: segmentation, depth prediction, and surface normal prediction. 
It contains \(1449\) RGBD images of indoor scenes, with corresponding dense annotations. 
\textbf{CelebA} dramatically increases the number of tasks to \(40\). 
It features approximately \(200\)K images of \(10\)K celebrities, where each face is annotated with \(40\) different binary attributes. 
The task is to classify the presence or absence of these facial attributes for each image. 

Note that CityScapes and NYUv2 are dense prediction tasks, whereas CelebA is a classification task, offering a diverse set of challenges for evaluating MTL methods.
Also note that, while each benchmark has its own set of performance metrics, 
the primary metrics for evaluating MTL methods are \textbf{MR} (Mean Rank) and \(\textbf{\(\Delta\) m\%}\)
(lower is better for both).

We compare \textsc{ConicGrad} against \(12\) multi-task optimization methods 
and a single-task baseline (STL), where a separate model is trained for each task. 
The comparison includes widely recognized MTL approaches such as 
MGDA~\citep{sener2018multi,desideri2012multiple}, 
PCGrad~\citep{yu2020gradient}, 
GradDrop~\citep{chen2020just}, 
CAGrad~\citep{liu2021conflict}, 
IMTL-G~\citep{liu2021towards}, 
NashMTL~\citep{navon2022multi}, 
FAMO~\citep{NEURIPS2023_b2fe1ee8}, and 
SDMGrad~\citep{xiao2024direction}.
Three established methods on gradient manipulation are also evaluated: 
DWA~\citep{liu2019end}, 
RLW~\citep{lin2021reasonable}, 
UW~\citep{kendall2018multi}, 
Additionally, we consider two baseline methods commonly used in MTL literature: 
Linear Scalarization (LS), which minimizes \(L^0\), and 
Scale-Invariant (SI), which minimizes \(\sum_k \log L^k(\theta)\).

\paragraph{Results.} 

On NYUv2 (3 tasks; see~\cref{tab:nyuv2}), \textsc{ConicGrad} delivers strong performance, particularly excelling in surface normal estimation.
While segmentation and depth metrics are slightly behind the best methods, they remain competitive.
Notably, while \textsc{ConicGrad} achieves the second best MR~(\(2.89\)),
it delivers the best average performance improvement across tasks (\(\Delta m\% = -5.13\)), 
demonstrating its ability to outperform the single-task baseline while effectively balancing task trade-offs.

On CityScapes (2 tasks; see~\cref{tab:cityscapes} (left)), \textsc{ConicGrad} achieves a strong balance between segmentation and depth estimation. 
While NashMTL has the best overall MTL performance with an MR of \(2.5\), 
\textsc{ConicGrad} ranks third overall, alongside SDMGrad, 
and just behind IMTL-G. 
Notably, it achieves the best \(\Delta m\%\)~(\(4.53\)), 
reflecting its superior ability to reduce average performance drops across tasks and optimize the trade-offs inherent in MTL. 

\begin{table}[tb]
    \centering
    \caption{\textbf{MTRL results on the Metaworld-10 (MT10) benchmark.} 
    Results are averaged over 10 runs.
    We report the performance of NashMTL~\citep{navon2022multi} as originally presented in their paper, 
    alongside the results reproduced by \citet{NEURIPS2023_b2fe1ee8} in the subsequent rows. 
    }
    \label{tab:mt10}
    \scalebox{0.9}{
    \begin{tabular}{@{}lc@{}}
    \toprule
    \multirow{2}{*}{\textbf{Method}} & \multirow{2}{*}{\begin{tabular}[c]{@{}c@{}}Success \(\uparrow\) \\ (mean \(\pm\) stderr)\end{tabular}} \\
    &  \\ 
    \midrule
    LS (lower bound) & \(0.49 \pm 0.07\) \\
    STL (proxy for upper bound) & \(0.90 \pm 0.03\) \\
    \midrule
    Multiheaded SAC~\citep{yu2020meta} & \(0.61 \pm 0.04\) \\
    PCGrad~\citep{yu2020gradient} & \(0.72 \pm 0.02\) \\
    Soft Modularization~\citep{yang2020multi} & \(0.73 \pm 0.04\) \\
    CAGrad~\citep{liu2021conflict} & \(0.83 \pm 0.05\) \\
    NashMTL~\citep{navon2022multi} (every 1) & \(0.91 \pm 0.03\) \\
    \textcolor{white}{NashMTL}~(reproduced by FAMO) & \(0.80 \pm 0.13\) \\
    NashMTL~\citep{navon2022multi} (every 50) & \(0.85 \pm 0.02\) \\
    \textcolor{white}{NashMTL}~(reproduced by FAMO) & \(0.76 \pm 0.10\) \\
    NashMTL~\citep{navon2022multi} (every 100) & \(0.87 \pm 0.03\) \\
    \textcolor{white}{NashMTL}~(reproduced by FAMO) & \(0.80 \pm 0.12\) \\
    FAMO~\citep{NEURIPS2023_b2fe1ee8} & \(0.83 \pm 0.05\) \\
    SDMGrad~\citep{xiao2024direction} & \(0.84 \pm 0.10\) \\
    \midrule
    \rowcolor{conicgrad} \textbf{\textsc{ConicGrad}} & \(\mathbf{0.89 \pm 0.02}\) \\ \bottomrule
    \end{tabular}
    }
\end{table}

On CelebA (40 tasks; see~\cref{tab:cityscapes} (right)), \textsc{ConicGrad} reports the best rank MR~(\(4.0\)) and lowest \(\Delta m\%\) of \(0.10\). 
This highlights its effectiveness to scale efficiently to large numbers of tasks,
while minimizing performance disparities with STL and achieving nearly optimal task balancing. 

These results highlight \textsc{ConicGrad} as a state-of-the-art method in multi-task learning, 
demonstrating strong performance across diverse MTL benchmarks, from small-scale to large-scale multi-task settings. 
We provide standard errors for our reported performance metrics in \cref{appdix:sl_exps}, 
demonstrating the small variability in our results and highlighting their stability and reliability.


\subsection{Multi-Task Reinforcement Learning}

In addition to supervised multi-task learning benchmarks, we evaluate \textsc{ConicGrad} in a multi-task Reinforcement Learning~(RL)~\citep{sutton2018reinforcement} setting. 
Gradient conflicts are particularly prevalent in RL due to the inherent stochasticity of the paradigm~\citep{yu2020gradient}, 
making it an ideal testbed for optimization strategies that handle such conflicts effectively.
Following prior works~\citep{yu2020gradient,liu2021conflict,NEURIPS2023_b2fe1ee8,xiao2024direction}, 
we benchmark \textsc{ConicGrad} on MetaWorld MT10~\citep{yu2020meta}, 
a widely used MTRL benchmark consisting of 10 robot manipulation tasks, 
each with a distinct reward function.

In accordance with the literature, our base RL algorithm is Soft Actor-Critic (SAC)~\citep{haarnoja2018soft}.
We adopt LS~(i.e., a joint SAC model) as our baseline and STL~(i.e., ten independent SACs, one for each task) as a proxy for skyline.
Other methods we compare to include PCGrad~\citep{yu2020gradient}, CAGrad~\citep{liu2021conflict}, NashMTL~\citep{navon2022multi}, FAMO~\citep{NEURIPS2023_b2fe1ee8}, and SDMGrad~\citep{xiao2024direction}. 
We also compare to an architectural approach to multi-task learning, Soft Modularization~\citep{yang2020multi}, 
wherein a routing mechanism is designed to estimate different routing strategies and all routes are softly combined to form different policies.

\paragraph{Results.}
~\cref{tab:mt10} summarizes the results on the MT10 benchmark. 
While NashMTL reports strong results, \citep{NEURIPS2023_b2fe1ee8} could not reproduce the same performance.
\textsc{ConicGrad} achieves a success rate of \(0.89\) with a standard error of \(0.02\),
outperforming all the contending methods and approaching the STL upper bound (\(0.90 \pm 0.03\)).
Notably, its also exhibits the lowest standard error among all methods which indicates more stable and consistent performance.


\subsection{Scalability Analysis for Larger Models}
\begin{figure}[t]
    \centering
    \includegraphics[width=1.05\linewidth]{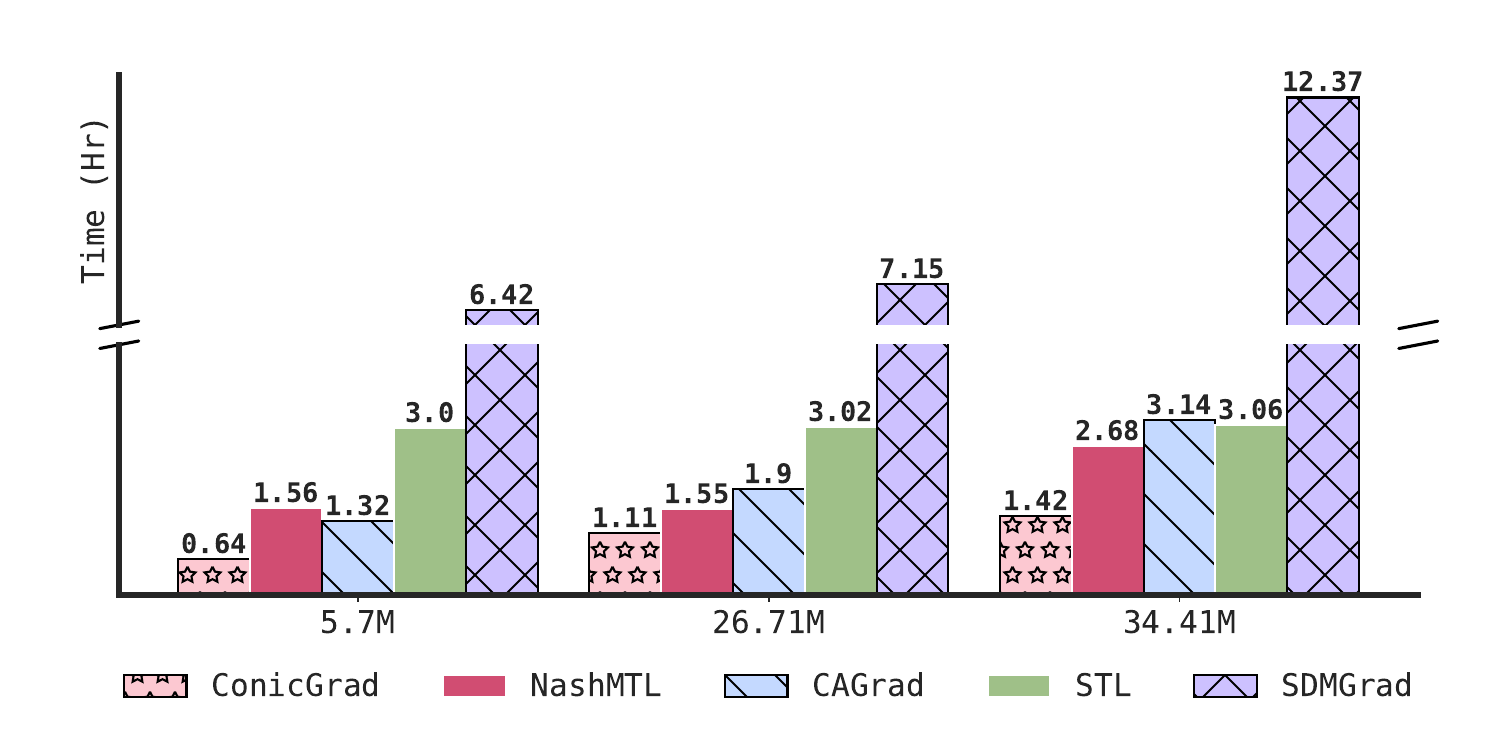}
    \caption{
        \textbf{Scalability Experiments on CelebA.} We measure the computational overhead of MTL methods as the model size increases (in terms of number of parameters) to illustrate how these methods scale.
    }
    \vspace{-10pt}
    \label{fig:scale}
\end{figure}

\begin{figure*}[t!]
    \centering
    \scalebox{0.85}{
    \includegraphics[width=\linewidth]{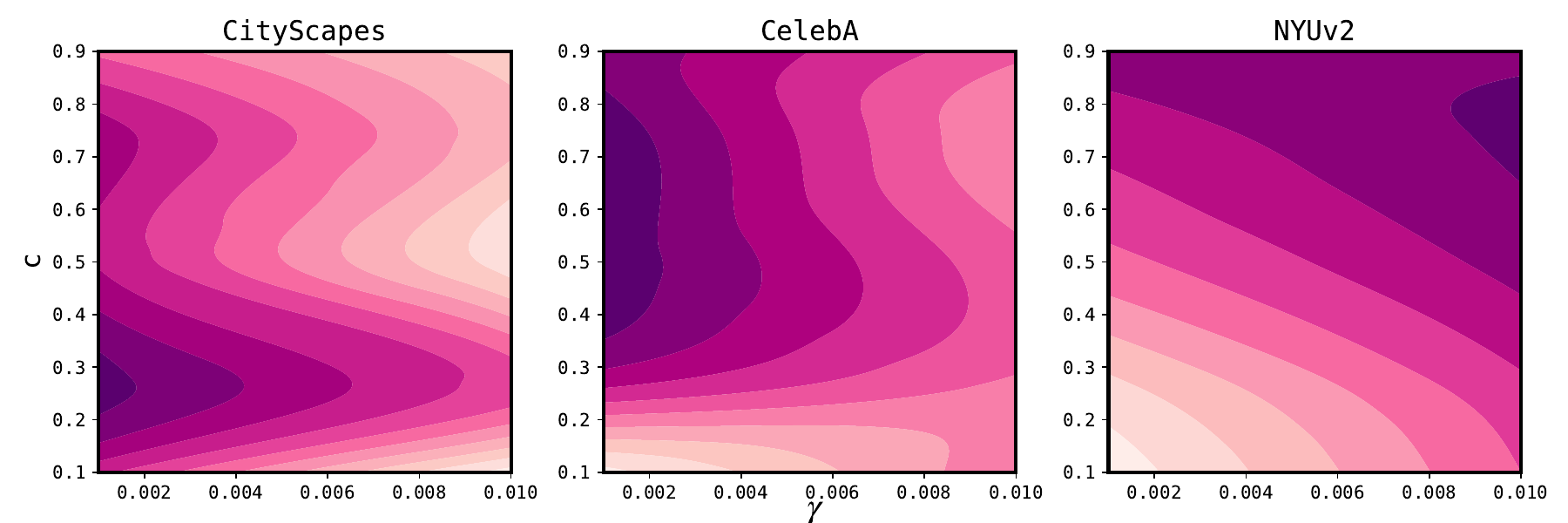}
    }
    \vspace{-0.5em}
    \caption{\textbf{Contour Plots of \(c\) and \(\gamma\) on three MTL benchmarks.} 
    We ablate the hyperparameters \(\gamma \in [0.001, 0.01]\) on the x-axes 
    and \(c \in \{0.1, 0.25, 0.5, 0.75, 0.9\}\) on the y-axes. 
    The raw data consists of discrete values for \(\gamma\) and \(c\) at specific points, 
    and we use interpolation to fill in the gaps 
    to create a continuous surface that reveals how \(\Delta m\%\) (darker areas indicate better performance) varies across the hyperparameter space. 
    }
    \label{fig:ablationplot}
\end{figure*}

We conduct scaling experiments to evaluate the computational overhead incurred by MTL methods as model size increases. 
This analysis is crucial since most of these methods involve direct manipulation of gradients associated with the parameters of the model. 
We compare CAGrad~\citep{liu2021conflict}, NashMTL~\citep{navon2022multi}, SDMGrad~\citep{xiao2024direction}, and \textsc{ConicGrad} along with STL.%
\footnote{
    We excluded FAMO~\citep{NEURIPS2023_b2fe1ee8} from this analysis as it is a zero-order algorithm
    (i.e., does not require explicit computation of gradients of \(\theta\)),
    while the rest of the methods are first-order.
}
CelebA~\citep{liu2015deep} is chosen as the evaluation benchmark due to its high task count (i.e., 40). 
The base model has \(5.2\)M parameters (measured using the ptflops package~\citep{ptflops}), 
and we create two scaled variants with \(26.71\)M and \(34.41\)M parameters, 
representing roughly \(5\times\) and \(7\times\) the base model size, 
by increasing the number of layers and neurons. 
For each method and model size, we measure the average time per epoch over two runs. 

\cref{fig:scale} illustrates the per-epoch time (in hours) for each algorithm as the underlying model size increases. 
Our method, \textsc{ConicGrad}, remains computationally efficient as the number of parameters grows, 
while other methods experience slowdowns.
In particular, SDMGrad~\citep{xiao2024direction} requires significantly more time due to its reliance on multiple forward passes for gradient estimation.

\subsection{Ablation Study}

We examine the effect of two key hyperparameters of \textsc{ConicGrad}, 
namely \(c\) which controls the maximum permissible angle between
the gradient update vector \(d\) and 
the reference gradient vector \(g_{0}\), 
and the regularization coefficient \(\gamma\). 
We explore \(\gamma \in [0.001, 0.01]\) and \(c \in \{0.1, 0.25, 0.5, 0.75, 0.9\}\) as the space of admissible values. 
In~\cref{fig:ablationplot}, we observe that smaller values of \(\gamma\) are preferred for the CityScapes and CelebA benchmarks, 
while a larger \(\gamma\) enhances performance on NYUv2. 
As for the conic constraint, \(c \geq 0.5\) is generally preferred for CelebA and NYUv2 benchmarks, 
enforcing the angle between \(d\) and \(g_0\) to remain below \(60^\circ\). 
In contrast, for CityScapes, a smaller \(c\) (i.e., \(c=0.25\)) leads to better performance, 
suggesting that the update vector \(d\) should be allowed to deviate more from \(g_0\) in this benchmark.

\section{Conclusion}
\label{sec:conclusion}


In this work, we explored Multi-Task Learning (MTL) 
through the lens of Multi-Objective Optimization (MOO) and introduced \textsc{ConicGrad}.
A fundamental challenge in MTL is gradient conflicts, where task gradients may point in opposing directions, 
making it difficult to find a unique gradient update vector \(d\) that improves all tasks simultaneously.
To address this, \textsc{ConicGrad} analyzes the evolving relationships between task-specific gradients as optimization progresses, 
and dynamically computes \(d\) at each training step.
\textsc{ConicGrad} offers a geometrically interpretable solution by enforcing an angular constraint, 
ensuring that \(d\) remains within a cone defined by an angle of at most \(\arccos(c)\) relative to the reference objective gradient \(g_0\). 
This formulation preserves alignment with \(g_0\) while still permitting adaptive adjustments to task-specific contributions. 
Additionally, we demonstrate that not only \textsc{ConicGrad} is computationally efficient, but also scales effectively to high-dimensional parameter spaces. 
Evaluations on standard supervised and reinforcement learning benchmarks demonstrate that \textsc{ConicGrad} consistently outperforms state-of-the-art methods in most cases, 
while remaining competitive in others.
 
\paragraph{Limitations and Future Work.}
Our method relies on the cone angle parameter \(c\), which influences the alignment constraint. 
While \textsc{ConicGrad} has demonstrated strong performance across various tasks with a small set of tried values for \(c\), 
a promising avenue for future work is the development of strategies to dynamically adjust \(c\) during training. 
This dynamic adaptation can enhance the algorithm's ability to navigate the loss landscape more effectively and potentially accelerate convergence.



\section*{Impact Statement}

This paper presents work whose goal is to advance the field of Machine Learning. 
There are many potential societal consequences of our work, none which we feel must be specifically highlighted here.

\bibliography{example_paper}
\bibliographystyle{icml2025}

\newpage
\appendix
\onecolumn
\section*{Appendix}

\section{Proofs}

\subsection{Optimal Gradient Direction}
\label{appxderiv:proofsconicgrad}
\begin{proposition}
Given the optimization problem in~\cref{eq:org_eq}, its Lagrangian in~\cref{eq:primal}, and assuming the Slater condition holds, the dual of the primal problem in~\cref{eq:obj}, then the optimal update direction \(d^{*}\) is given by
\begin{equation*}
    d^{*} = \frac{1}{\lambda}\left(c^{2}\norm{g_{0}}^{2}\mathbb{I} - g_{0}g_{0}^\top\right)^{-1}g_{\omega}
\end{equation*}
\end{proposition}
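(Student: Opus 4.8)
The plan is to obtain \(d^{*}\) by solving, in closed form, the inner maximization over \(d\) that appears in the dual problem~\cref{eq:obj}. Fixing the dual variables \(\lambda \ge 0\) and \(\omega \in \mathcal{W}\), the inner objective is
\[
    f(d) = g_\omega^\top d - \frac{\lambda}{2}\left(c^2\norm{g_0}^2\norm{d}^2 - \norm{g_0^\top d}^2\right).
\]
First I would collapse the quadratic penalty into a single symmetric quadratic form. Using \(\norm{d}^2 = d^\top d\) and \(\norm{g_0^\top d}^2 = (g_0^\top d)^2 = d^\top g_0 g_0^\top d\), the penalty equals \(d^\top\big(c^2\norm{g_0}^2\mathbb{I} - g_0 g_0^\top\big)d\). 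Writing \(\mathbf{Z} = c^2\norm{g_0}^2\mathbb{I} - g_0 g_0^\top\), this reduces \(f\) to \(f(d) = g_\omega^\top d - \tfrac{\lambda}{2}\, d^\top \mathbf{Z} d\).

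Next I would apply the first-order optimality condition. Since \(\mathbf{Z}\) is symmetric, \(\nabla_d f(d) = g_\omega - \lambda\,\mathbf{Z} d\), and setting this to zero yields \(\lambda\,\mathbf{Z} d = g_\omega\). Provided \(\lambda > 0\) and \(\mathbf{Z}\) is invertible, solving gives
\[
    d^{*} = \frac{1}{\lambda}\,\mathbf{Z}^{-1} g_\omega = \frac{1}{\lambda}\left(c^2\norm{g_0}^2\mathbb{I} - g_0 g_0^\top\right)^{-1} g_\omega,
\]
which is exactly the claimed expression. Invertibility of \(\mathbf{Z}\) I would establish from its eigenstructure: because \(g_0 g_0^\top\) is rank one, \(\mathbf{Z}\) has eigenvalue \((c^2 - 1)\norm{g_0}^2\) along \(g_0\) and \(c^2\norm{g_0}^2\) on the orthogonal complement of \(g_0\); both are nonzero whenever \(g_0 \neq 0\) and \(c \in (0,1)\), so \(\mathbf{Z}\) is nonsingular (and the SMW reduction in~\cref{sec:smw} then supplies a stable way to apply \(\mathbf{Z}^{-1}\)).

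The main obstacle is justifying that this stationary point is genuinely the maximizer of the inner problem rather than a saddle: the eigenvalue \((c^2-1)\norm{g_0}^2\) is negative for \(c<1\), so \(\mathbf{Z}\) is indefinite and \(f\) has positive curvature along \(g_0\). I would resolve this by working with the constrained primal and its KKT conditions rather than an unconstrained maximization over all of \(\mathbb{R}^{M}\). Concretely, the Slater condition (established in~\cref{appxprop:slatercond}) gives strong duality and a zero duality gap, so the max and min may be exchanged as in~\cref{eq:obj}; at the optimum the angular constraint is active, and the Lagrangian stationarity condition on the feasible cone coincides with \(\nabla_d f(d^{*}) = 0\). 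Verifying the constraint qualification and the second-order conditions restricted to the feasible cone—where the relevant curvature is governed by the positive eigenvalues of \(\mathbf{Z}\)—is what certifies \(d^{*}\) as the optimizer, completing the argument.
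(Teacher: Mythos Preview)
Your proposal is correct and mirrors the paper's own proof: both differentiate the Lagrangian with respect to \(d\), set the gradient to zero, and solve the resulting linear system \(\lambda\,\mathbf{Z} d = g_\omega\) for \(d^{*}\). You in fact go further than the paper by explicitly checking the invertibility of \(\mathbf{Z}\) via its eigenstructure and by flagging the indefiniteness/saddle-point concern; the paper's derivation simply asserts concavity and applies the first-order condition without addressing either point.
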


\begin{proof}
We re-produce the Lagrangian from~\cref{eq:primal} as
\begin{equation}
\label{eq:lagrangian_expanded}
    \max\limits_{d \in \mathbb{R}^{M}} \min\limits_{i \in [K]} \langle g_{i},d\rangle - \frac{\lambda}{2} (c^2 \norm{g_{0}}^2 \norm{d}^2 - \norm{g_0^\top d}^2).
\end{equation}

We take the partial derivative of~\cref{eq:lagrangian_expanded} with respect to \(d\) while keeping \(\lambda\) and \(\omega\) fixed, i.e., \(\frac{\partial}{\partial(d)}\) and get
\begin{align}
\label{eq:der_wrt_d}
    &g_{\omega} - \lambda\left[c^{2}\norm{g_{0}}^{2} d - g_{0}g_{0}^\top d\right] = 0,\\
    &g_{\omega} - \lambda c^{2}\norm{g_{0}}^{2} d + \lambda g_{0}g_{0}^\top d = 0.
\end{align}

Since we want to find \(d^{*}\), we collect all the terms dependent on \(d\) and re-arrange as
\begin{align}
    \label{eq:sim_der_wrt_d4}
    &g_{\omega} = \lambda c^{2}\norm{g_{0}}^{2} d - \lambda g_{0}g_{0}^\top d,\\
    &g_{\omega} = \lambda d (c^{2}\norm{g_{0}}^{2}\mathbb{I} - g_{0}g_{0}^\top).
\end{align}

Now, we can write \(d^{*}\) as
\begin{align}
    d^{*} &= \frac{g_{\omega}}{\lambda (c^{2}\norm{g_{0}}^{2}\mathbb{I} - g_{0}g_{0}^\top)}, \\
    d^{*} &= \frac{1}{\lambda}\left(c^{2}\norm{g_{0}}^{2}\mathbb{I} - g_{0}g_{0}^\top\right)^{-1}g_{\omega},
\end{align}
and we arrive at the equation we set out to prove.
\end{proof}

\subsection{Optimizing for \(\lambda\)}
\label{app:lambda_opt}

To optimize \(\lambda\), we first substitute \(d^*\) from \cref{eq:dstar} into \cref{eq:obj_without_max}.  
To simplify the resulting expression, 
we define \(\mathbf{Z} \coloneqq \left(c^{2}\norm{g_{0}}^{2}\mathbb{I} - g_{0}g_{0}^\top\right)\), 
which is independent of \(\lambda\), following a similar approach as in \cref{sec:smw}, then

\[
E(\lambda)\;=\;\frac{1}{\lambda}\,\bigl(g_{w}^{\top} \mathbf{Z}^{-1}\,g_{w}\bigr)\;-\;\frac{\lambda}{2}\,\Bigl(\,c^{2}\,\|g_{0}\|^{2}\,\Bigl\|\tfrac{\mathbf{Z}^{-1}\,g_{w}}{\lambda}\Bigr\|^{2}\;-\;\Bigl\|\tfrac{g_{0}^{\top}\mathbf{Z}^{-1}\,g_{w}}{\lambda}\Bigr\|^{2}\Bigr).
\]

This further simplifies to

\[
E(\lambda) 
\;=\;
\underbrace{\frac{g_{w}^{\top}\mathbf{Z}^{-1}\,g_{w}}{\lambda}}_{\text{Term 1}}
\;-\;
\underbrace{\frac{1}{2\,\lambda}
\Bigl[
   c^{2}\,\|g_{0}\|^{2}\,\|\mathbf{Z}^{-1}\,g_{w}\|^{2} 
   - 
   \bigl(g_{0}^{\top}\mathbf{Z}^{-1}\,g_{w}\bigr)^{2}
\Bigr]}_{\text{Term 2}}
\;=\;
\frac{1}{\lambda}\,\Bigl[
  g_{w}^{\top}\mathbf{Z}^{-1}\,g_{w}
  \;-\;
  \tfrac{1}{2}\,
  \bigl(
    c^{2}\,\|g_{0}\|^{2}\,\|\mathbf{Z}^{-1}\,g_{w}\|^{2}
    -
    (g_{0}^{\top}\mathbf{Z}^{-1}\,g_{w})^{2}
  \bigr)
\Bigr].
\]

\paragraph{Derivative w.r.t. \(\lambda\)}

Let the constant part (independent of \(\lambda\)) be denoted by \(C\), then we can write

\[
C 
\;=\;
g_{w}^{\top}\mathbf{Z}^{-1}\,g_{w}
\;-\;
\tfrac{1}{2}\,
\Bigl[
  c^{2}\,\|g_{0}\|^{2}\,\|\mathbf{Z}^{-1}\,g_{w}\|^{2}
  - 
  (g_{0}^{\top}\mathbf{Z}^{-1}\,g_{w})^{2}
\Bigr].
\]

So, \(E(\lambda) \;=\; \frac{C}{\lambda}\), and its derivative is
\[
\frac{d}{d\lambda}\,E(\lambda)
\;=\;
\frac{d}{d\lambda}\Bigl(\tfrac{C}{\lambda}\Bigr)
\;=\;
C \,\cdot \frac{d}{d\lambda}\Bigl(\tfrac{1}{\lambda}\Bigr)
\;=\;
-\frac{C}{\lambda^{2}}.
\]
However, since \(C \neq 0\), no finite \(\lambda\) satisfies the stationary condition. This exemplifies why, in Machine Learning (ML), iterative optimization is often preferred over algebraic closed-form solutions,
as many Lagrange-like formulations do not yield a single closed-form multiplier. Thus, in this work, we set \(\lambda = 1\) and found it to perform well empirically.

\subsection{Convergence Analysis of \textsc{ConicGrad}}
\label{appx:cagradcongvergence}
We borrow from~\citep{liu2021conflict} the style and language for the purpose of analyzing convergence rate of \textsc{ConicGrad}. We abuse the notation and assume that \(L_{0}\) denotes a general function which has associated gradient \(g_{0} = \nabla L_{0}\).

\begin{theorem}
Assume individual loss functions \(L_{0}, L_{1}, \cdots L_{K}\) are differentiable on \(\mathbb{R}^{M}\) and their gradients \(\nabla L_{i}(\theta)\) are all \(L-Lipschitz\), i.e., \(\norm{\nabla L_{i}(x) - \nabla L_{i}(y)} \leq L\norm{x - y}\) for \(i = 0,1,\cdots,K\), where \(L \in (0, \infty)\), and \(L_{0}\) is bounded from below i.e., \(L^{*}_{0} = \inf_{\theta \in \mathbb{R}^{m}} L_{0}(\theta) > -\infty\). Then, with a fixed step-size \(\alpha\) satisfying \(0 < \alpha < \frac{1}{L}\), and in case of \(-1 \leq c \leq 1\), \textsc{ConicGrad} satisfies
\begin{equation}
    \label{eq:rateconverge}
    \sum_{t=0}^{T}\norm{g_{0}(\theta)}^{2} \leq \frac{2(L_{0}(0)-L_{0}^{*})}{\alpha(2\kappa c-1)(T+1)}.
\end{equation}

\begin{proof}
Consider the optimization step to be \(t^{th}\) and let \(d^{*}(\theta_{t})\) be the update direction obtained by solving~\cref{eq:org_eq}, then we can write
\begin{align*}
    L_{0}(\theta_{t+1}) - L_{0}(\theta_{t}) &\leq L_{0}(\theta_{t} - \alpha d^{*}(\theta_{t})) - L_{0}(\theta_{t}) \\
    L_{0}(\theta_{t+1}) - L_{0}(\theta_{t}) &\leq L_{0}(\theta_{t} - \alpha d^{*}(\theta_{t})) - L_{0}(\theta_{t}) \leq -\alpha g_{0}(\theta_{t})^\top d^{*}(\theta_{t}) + \frac{L}{2}\norm{-\alpha d^{*}(\theta_{t})}^{2} \quad \text{\textit{by smoothness}} \\
    & \leq -\alpha g_{0}(\theta_{t})^\top d^{*}(\theta_{t}) + \frac{L\alpha^{2}}{2}\norm{d^{*}(\theta_{t})}^{2} \quad \text{\textit{re-arranging}}\\
    & \leq -\alpha g_{0}(\theta_{t})^\top d^{*}(\theta_{t}) + \frac{\alpha}{2}\norm{d^{*}(\theta_{t})}^{2} \quad \text{\textit{by}}~\alpha \leq \frac{1}{L} \\
    & \leq -\alpha c\norm{g_{0}(\theta_{t})}\norm{d^{*}(\theta_{t})} + \frac{\alpha}{2}\norm{d^{*}(\theta_{t})}^{2} \quad \text{\textit{by} constraint in~\cref{eq:org_eq}} \\
    & = -\alpha c\norm{g_{0}(\theta_{t})}\kappa\norm{g_{0}(\theta_{t})} + \frac{\alpha}{2}(\kappa\norm{g_{0}(\theta_{t})})^{2} \quad \text{\textit{because we enforce}}~\norm{d^{*}(\theta_{t})} \approx \kappa\norm{g_{0}(\theta)} \\
    & = -\alpha\kappa c\norm{g_{0}(\theta_{t})}^{2} + \frac{\alpha}{2}\kappa^{2}\norm{g_{0}(\theta_{t})}^{2} \\
    & = - (\alpha\kappa c - \frac{\alpha}{2}) \norm{g_{0}(\theta_{t})}^{2} \\
    & = - \left(\frac{2\alpha\kappa c - \alpha}{2}\right) \norm{g_{0}(\theta_{t})}^{2},
\end{align*}
where \(\kappa\) is some constant that \(\norm{d^{*}(\theta_{t})}\) approximates \(\norm{g_{0}(\theta)}\) with. Note that this follows from the normalization term \(\Tilde{d} = d^{*}\frac{\norm{g_{0}}}{\norm{d^{*}}}\). Using telescopic sums, we have \(L_{0}(\theta_{T+1}) - L_{0}(0) = - \left(\frac{2\alpha\kappa c - \alpha}{2}\right) \sum_{t=0}^{T}\norm{g_{0}(\theta_{t})}^{2}\). Therefore,
\begin{align*}
    &\min_{t\leq T}\norm{g_{0}(\theta_{t})}^{2} \leq \frac{1}{T+1}\sum_{t=0}^{T}\norm{g_{0}(\theta_{t})}^{2} \leq \frac{2(L_{0}(0)-L_{0}(\theta_{T+1}))}{2\alpha\kappa c - \alpha(T+1)}\\
    & = \min_{t\leq T}\norm{g_{0}(\theta_{t})}^{2} \leq \frac{1}{T+1}\sum_{t=0}^{T}\norm{g_{0}(\theta_{t})}^{2} \leq \frac{2(L_{0}(0)-L_{0}(\theta_{T+1}))}{\alpha(2\kappa c-1)(T+1)}\\
\end{align*}
Therefore, if \(L_{0}\) is bounded from below, then \(\min_{t\leq T}\norm{g_{0}(\theta_{t})}^{2} = \mathcal{O}(\frac{1}{T})\).
\end{proof}
\end{theorem}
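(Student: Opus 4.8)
The plan is to run a standard descent-lemma argument driven by the $L$-smoothness of $L_0$. Since $\nabla L_0$ is $L$-Lipschitz, $L_0$ obeys the quadratic upper bound $L_0(y) \leq L_0(x) + \nabla L_0(x)^\top(y-x) + \tfrac{L}{2}\norm{y-x}^2$. I would instantiate this at $x = \theta_t$ and $y = \theta_{t+1} = \theta_t - \alpha d^*(\theta_t)$, the \textsc{ConicGrad} update, which immediately yields the one-step bound
\begin{equation*}
    L_0(\theta_{t+1}) - L_0(\theta_t) \leq -\alpha\, g_0(\theta_t)^\top d^*(\theta_t) + \tfrac{L\alpha^2}{2}\norm{d^*(\theta_t)}^2.
\end{equation*}
From here the remaining work is to convert the right-hand side into a negative multiple of $\norm{g_0(\theta_t)}^2$.

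Next I would chain three simplifications. First, the step-size condition $\alpha < 1/L$ gives $\tfrac{L\alpha^2}{2} \leq \tfrac{\alpha}{2}$, replacing the smoothness constant by the cleaner coefficient $\tfrac{\alpha}{2}$. Second, I invoke the feasibility of $d^*$ with respect to the angular constraint of \cref{eq:org_eq}, namely $g_0^\top d^* \geq c\,\norm{g_0}\norm{d^*}$, to lower-bound the linear descent term (equivalently $-\alpha\, g_0^\top d^* \leq -\alpha c\,\norm{g_0}\norm{d^*}$). Third, I use the norm-matching normalization $\tilde{d} = d^*\tfrac{\norm{g_0}}{\norm{d^*}}$ introduced for stability, which forces $\norm{d^*(\theta_t)} = \kappa\,\norm{g_0(\theta_t)}$ for a proportionality constant $\kappa$; substituting this everywhere collapses both surviving terms to multiples of $\norm{g_0}^2$ and produces the per-step decrease
\begin{equation*}
    L_0(\theta_{t+1}) - L_0(\theta_t) \leq -\Bigl(\alpha\kappa c - \tfrac{\alpha}{2}\Bigr)\norm{g_0(\theta_t)}^2 = -\tfrac{\alpha(2\kappa c - 1)}{2}\,\norm{g_0(\theta_t)}^2.
\end{equation*}
Finally, summing this telescoping inequality over $t = 0,\dots,T$, bounding $L_0(\theta_{T+1}) \geq L_0^*$, and dividing by $(T+1)$ gives the claimed cumulative bound and the $\mathcal{O}(1/T)$ rate for $\min_{t\le T}\norm{g_0(\theta_t)}^2$.

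The part I expect to require the most care is the normalization step: the relation $\norm{d^*} \approx \kappa\norm{g_0}$ is precisely what makes the bound expressible purely in $\norm{g_0}^2$, and it hinges on the design choice of rescaling $d^*$ to match $\norm{g_0}$. I would also need the coefficient $2\kappa c - 1$ to be strictly positive so that the right-hand side is a genuine descent and the final bound is finite and meaningful; this imposes an effective requirement $\kappa c > \tfrac12$ on the cone parameter $c$ and the normalization constant $\kappa$, which I would state explicitly rather than leave inside the loosely written ``$\approx$''. Verifying that the feasible $d^*$ returned by the dual solution indeed satisfies the angular constraint with the prescribed $\kappa$ after normalization is the one genuinely non-mechanical point; everything else is the routine telescoping of a smooth-descent recursion.
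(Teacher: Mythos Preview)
Your proposal is correct and follows essentially the same route as the paper: smoothness descent lemma, replace $L\alpha^2/2$ by $\alpha/2$ via the step-size bound, invoke the cone constraint $g_0^\top d^*\ge c\norm{g_0}\norm{d^*}$, substitute $\norm{d^*}=\kappa\norm{g_0}$ from the normalization, and telescope. Your added remark that one needs $2\kappa c-1>0$ for the bound to be meaningful is a point the paper leaves implicit; otherwise the two arguments coincide step for step.
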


\begin{proposition}\label{appxprop:cgandgd}
When \(c = 1\) in \textsc{ConicGrad}'s optimization objective (\cref{eq:org_eq}), then the update direction vector and average gradient are collinear and positively aligned. Therefore, \textsc{ConicGrad} recovers gradient descent objective with \(d = g_{0}\).
\begin{proof}
From~\cref{eq:org_eq}, recall that
\begin{equation*}
    \max\limits_{d \in \mathbb{R}^K} \min\limits_{i \in [T]} \langle g_i,d\rangle \quad s.t. \quad \frac{\langle g_0,d\rangle}{\norm{g_0} \norm{d}} \geq c,
\end{equation*}
where \(c\) refers to a value between \([-1,1]\) because \(-1\leq cos(\theta) \leq 1\) is the range of the cosine function. In gradient descent for MTL, the update direction is the convex combination of different objective functions. Assume a simple 2 task problem with gradients \(g_{1}\) for task 1 and \(g_{2}\) for task 2, then the update vector \(d\) with gradient descent is given by
\begin{equation}
\label{eq:gdmtl}
    d = \frac{g_{1}+g_{2}}{2}.
\end{equation}

We substitute \(d = \frac{g_{1}+g_{2}}{2}\) and \(g_{0} = \frac{g_{1}+g_{2}}{2}\) in~\cref{eq:org_eq}, and simplify the objective by first computing the numerator \(\langle g_{0}, d\rangle\) as,
\begin{align}
    \langle g_{0}, d\rangle &= \langle \frac{g_{1}+g_{2}}{2}, \frac{g_{1}+g_{2}}{2}\rangle \\
    \label{eq:innerprod}
    &= \frac{1}{4}\left(\left\Vert g_{1}\right\Vert^{2} + 2\langle g_{1},g_{2}\rangle + \left\Vert g_{2}\right\Vert^{2}\right).
\end{align}
Then, we compute both values in the denominator, i.e., \(\left\Vert g_{0}\right\Vert\) and \(\left\Vert d\right\Vert\) as,
\begin{align}
    \left\Vert g_{0}\right\Vert &= \left\Vert \frac{g_{1}+g_{2}}{2}\right\Vert = \sqrt{\langle\frac{g_{1}+g_{2}}{2},\frac{g_{1}+g_{2}}{2}\rangle} \\
    &= \sqrt{\frac{1}{4}\left(\left\Vert g_{1}\right\Vert^{2} + 2\langle g_{1},g_{2}\rangle + \left\Vert g_{2}\right\Vert^{2}\right)} \quad \because~\text{\cref{eq:innerprod}}
\end{align}
and since \(d = g_{0}\), we can say \(\left\Vert d\right\Vert = \left\Vert g_{0}\right\Vert\). We then substitute all the expressions in \textsc{ConicGrad's} objective and get
\begin{align}
    &= \frac{\frac{1}{4}\left(\left\Vert g_{1}\right\Vert^{2} + 2\langle g_{1},g_{2}\rangle + \left\Vert g_{2}\right\Vert^{2}\right)}{\sqrt{\frac{1}{4}\left(\left\Vert g_{1}\right\Vert^{2} + 2\langle g_{1},g_{2}\rangle + \left\Vert g_{2}\right\Vert^{2}\right)} \cdot \sqrt{\frac{1}{4}\left(\left\Vert g_{1}\right\Vert^{2} + 2\langle g_{1},g_{2}\rangle + \left\Vert g_{2}\right\Vert^{2}\right)}} \\
    \label{eq:conicgradgdfinal}
    &= \frac{\frac{1}{4}\left(\left\Vert g_{1}\right\Vert^{2} + 2\langle g_{1},g_{2}\rangle + \left\Vert g_{2}\right\Vert^{2}\right)}{\frac{1}{4}\left(\left\Vert g_{1}\right\Vert^{2} + 2\langle g_{1},g_{2}\rangle + \left\Vert g_{2}\right\Vert^{2}\right)} = 1.
\end{align}
\cref{eq:conicgradgdfinal} implies that \(\frac{\langle g_0,d\rangle}{\norm{g_0} \norm{d}} = 1\) which is only true when the angle between \(d\) and \(g_{0}\) is \(0^{\circ}\), because \(\cos(0^{\circ}) = 1\), and both \(d\) and \(g_{0}\) are exactly collinear and positively aligned. Therefore, \textsc{ConicGrad} recovers the the gradient descent objective in MTL setting in this case.
\end{proof}
\end{proposition}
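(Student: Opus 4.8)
The plan is to recognize that the cosine-similarity constraint at \(c=1\) is exactly the equality case of the Cauchy--Schwarz inequality, which pins the direction of \(d\) completely. The cosine ratio \(\tfrac{\langle g_0,d\rangle}{\norm{g_0}\norm{d}}\) is bounded above by \(1\) for all nonzero \(g_0,d\), so demanding that it be \(\geq 1\) can only be satisfied with equality.

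First I would set \(c=1\) in the constraint of \cref{eq:org_eq}, giving \(\langle g_0,d\rangle \geq \norm{g_0}\norm{d}\). Pairing this with the Cauchy--Schwarz bound \(\langle g_0,d\rangle \leq \norm{g_0}\norm{d}\) forces equality, \(\langle g_0,d\rangle = \norm{g_0}\norm{d}\). Second, I would invoke the standard equality condition for Cauchy--Schwarz: equality holds iff \(d=\alpha g_0\) for some scalar \(\alpha\), and the nonnegativity of \(\langle g_0,d\rangle=\alpha\norm{g_0}^2\) forces \(\alpha\geq 0\). Hence \(d\) is collinear with and positively aligned to \(g_0\). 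Third, I would apply the normalization step \(\Tilde{d}=d^{*}\tfrac{\norm{g_0}}{\norm{d^{*}}}\) introduced earlier, which sets \(\norm{d}=\norm{g_0}\) and thus \(\alpha=1\), yielding \(d=g_0\) and recovering the uniform-average gradient-descent update of \cref{eq:mtleq}.

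The main subtlety, rather than a true obstacle, is confirming that the objective \(\min_{i}\langle g_i,d\rangle\) does not alter this conclusion and that degenerate cases are excluded. Since the feasible set at \(c=1\) is the single ray \(\{\alpha g_0:\alpha>0\}\), the objective reduces to \(\alpha\min_i\langle g_i,g_0\rangle\), which scales only with the magnitude \(\norm{d}\); the direction is therefore dictated entirely by the constraint, and the normalization alone selects \(\alpha\). I would also note that the statement tacitly assumes \(g_0\neq 0\) and \(d\neq 0\) so the ratio is well-defined. This Cauchy--Schwarz route is both necessary and sufficient, and it is cleaner and more general than directly substituting \(d=g_0\) and verifying the ratio equals one, since it proves that \(c=1\) \emph{forces} collinearity rather than merely checking consistency.
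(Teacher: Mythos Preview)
Your argument is correct and in fact tighter than the paper's own proof. The paper proceeds by \emph{assuming} \(d=g_{0}\) (illustrated in a two-task case with \(g_{0}=\tfrac{1}{2}(g_{1}+g_{2})\)), expanding the inner product and norms explicitly, and verifying that the cosine ratio equals \(1\); from this it reads off that the angle is \(0^{\circ}\) and hence \(d\) and \(g_{0}\) are collinear. In other words, the paper checks consistency of the choice \(d=g_{0}\) with the constraint at \(c=1\). Your route instead starts from the constraint at \(c=1\), pairs it with Cauchy--Schwarz to force equality, invokes the equality condition to conclude the feasible set is exactly the ray \(\{\alpha g_{0}:\alpha>0\}\), and then uses the normalization \(\Tilde{d}=d^{*}\norm{g_{0}}/\norm{d^{*}}\) to pin \(\alpha=1\). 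This establishes \emph{necessity}---that \(c=1\) leaves no choice of direction---rather than merely sufficiency, works for general \(K\) without specializing to two tasks, and also makes explicit why the inner max--min objective plays no role (it only scales with \(\alpha\)). Both arrive at the same conclusion; yours is the more complete argument for the proposition as stated.
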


\begin{proposition}\label{appxprop:slatercond} If \(c < 1\), then the direction vector \(d_{s} = \alpha g_{0}\) (with \(\alpha > 0\)) satisfies
\begin{align*}
    \frac{\langle g_{0},d_{s}\rangle}{\left\Vert g_{0}\right\Vert \left\Vert d_{s}\right\Vert} > c,
\end{align*}
i.e., \( \langle g_{0},d_{s}\rangle - c\left\Vert g_{0}\right\Vert \left\Vert d_{s}\right\Vert > 0\), and Slater condition holds.
\begin{proof}
Let \(g_{0},g_{1},\cdots,g_{k} \in \mathbb{R}^{m}\), and at each iteration we solve
\begin{align*}
    \max\limits_{d \in \mathbb{R}^K} \min\limits_{i \in [T]} \langle g_i,d\rangle \quad s.t. \quad \frac{\langle g_0,d\rangle}{\norm{g_0} \norm{d}} > c \quad \text{from~\cref{eq:org_eq}}.
\end{align*}
where we replace \(\geq\) with \(>\) because strict inequality is what needs to hold. From this we can say
\begin{align}
    \langle g_0,d\rangle > c\norm{g_0}\norm{d},\\
    \label{eq:slatercondition}
    = \langle g_0,d\rangle - c\norm{g_0}\norm{d} > 0.
\end{align}
We aim to show the existence of at least one point \(d_{s}\) in the feasible set satisfying~\cref{eq:slatercondition}, 
i.e., \(\langle g_0,d_{s}\rangle - c\norm{g_0}\norm{d_{s}} > 0\). Let \(d_{s} = \alpha g_{0} \in \mathbb{R}^m\) be the vector \(g_{0}\) scaled by some scalar \(\alpha > 0\). We can then re-write~\cref{eq:org_eq} as
\begin{align}
    \label{eq:maincondition}
    \frac{\langle g_{0},d_{s}\rangle}{\left\Vert g_{0}\right\Vert \left\Vert d_{s}\right\Vert} > c.
\end{align}
We substitute the value of \(d_{s}\) and simplify both numerator and denominator as \(\langle g_{0},d_{s}\rangle = \langle g_{0},\alpha g_{0}\rangle = \alpha\norm{g_{0}}^{2}\), and \(\norm{d_{s}} = \norm{\alpha g_{0}} = \alpha\norm{g_{0}}\). We now re-write~\cref{eq:slatercondition} as
\begin{align}
    & \left(\alpha\norm{g_{0}}^{2}\right) - c\norm{g_0}\left(\alpha\norm{g_{0}}\right) > 0\\
    &= \alpha\norm{g_{0}}^{2}  - c\alpha\norm{g_0}^2 > 0 \\
    \label{eq:slatercondition2}
    &= \alpha\norm{g_0}^2(1-c) > 0.
\end{align}
In~\cref{eq:slatercondition2}, as \(\alpha\norm{g_0}^2 > 0\), therefore \(1-c > 0\) or \(c < 1\). Hence, we see that \(d_{s}\) strictly satisfies the inequality \(\langle g_{0},d\rangle - c\left\Vert g_{0}\right\Vert \left\Vert d\right\Vert > 0\) and~\cref{eq:maincondition}, and \(d_{s}\) is a feasible point. Since Slater condition holds, we have strong duality.
\end{proof}
\begin{remark}
In~\cref{appxprop:cgandgd}, we show that in the case of \(c = 1\), \textsc{ConicGrad} recovers gradient descent with \(d = g_{0}\). However, in the case of \(c > 1\), we see that it implies \(\frac{\langle g_0,d\rangle}{\norm{g_0} \norm{d}} \geq c > 1\), and this is impossible by Cauchy–Schwarz inequality (which states \(\langle g_0,d\rangle \leq \norm{g_{0}}\norm{d}\)). Therefore, the feasible set is empty and no solution exists.
\end{remark}
\end{proposition}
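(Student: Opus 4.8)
The plan is to verify Slater's condition by exhibiting a single strictly feasible point for the angular constraint in \cref{eq:org_eq}; once such a point exists, strong duality holds and the max--min swap that carries us from \cref{eq:primal} to \cref{eq:obj} is justified. The constraint reads $\langle g_0, d\rangle / (\norm{g_0}\norm{d}) \ge c$, so it suffices to produce one $d$ whose cosine similarity with $g_0$ strictly exceeds $c$.

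The natural candidate is a vector pointing exactly along $g_0$, namely $d_s = \alpha g_0$ with $\alpha > 0$, since aligning $d$ with $g_0$ makes the cosine similarity as large as possible (equal to $1$). First I would compute the numerator $\langle g_0, d_s\rangle = \alpha\norm{g_0}^2$ and the denominator $\norm{g_0}\,\norm{d_s} = \alpha\norm{g_0}^2$, where the latter uses $\norm{\alpha g_0} = \alpha\norm{g_0}$ for $\alpha > 0$. Their ratio is exactly $1$, and because $c < 1$ we conclude $1 > c$, so the constraint holds strictly. Equivalently, I would rewrite the constraint in product form and show $\langle g_0, d_s\rangle - c\norm{g_0}\norm{d_s} = \alpha\norm{g_0}^2(1-c)$, which is strictly positive since $\alpha > 0$, $\norm{g_0}^2 > 0$, and $1 - c > 0$.

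There is no genuine analytical obstacle here; the computation is elementary. The only points requiring care are the implicit assumption $g_0 \neq 0$ (otherwise both the ratio and the constraint degenerate), and the need for $\alpha > 0$ so that $\norm{\alpha g_0} = \alpha\norm{g_0}$ without a sign flip. The conceptual content is simply that $g_0$ is the axis of the cone, so it lies strictly in the interior whenever the cone has positive aperture---precisely the regime $c < 1$. This dovetails with the accompanying remark: at $c = 1$ the cone collapses to the single ray through $g_0$ (recovering gradient descent, \cref{appxprop:cgandgd}), and for $c > 1$ the feasible set is empty by Cauchy--Schwarz, so $c < 1$ is exactly the range in which a strictly interior point exists.
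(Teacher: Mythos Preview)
Your proposal is correct and follows essentially the same approach as the paper: both pick $d_s = \alpha g_0$ with $\alpha > 0$, compute $\langle g_0, d_s\rangle = \alpha\norm{g_0}^2$ and $\norm{d_s} = \alpha\norm{g_0}$, and reduce the strict feasibility condition to $\alpha\norm{g_0}^2(1-c) > 0$. If anything, you are slightly more careful in flagging the implicit assumption $g_0 \neq 0$, which the paper leaves unstated.
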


\section{ Detailed Look at \textsc{ConicGrad}}
\subsection{Exploring \textsc{ConicGrad}'s Hyperparameter \(c\)}
The range of \(c\) is \(\in [-1,1]\), and hence no solution exists outside of this range. 
In practice, we restrict \(c \in (0, 1]\) to avoid negative correlation. 
In~\cref{fig:geometricinterpretation} we visualize the range of admissible update directions (the \colorbox{conicgrad}{pink} area), given a certain \(c\). 

\begin{itemize}
\setlength\itemsep{0em}
    \item When \(c = 1\), then both the update direction vector \(d\) and average gradient \(g_{0}\) are collinear, i.e., they lie on top of each other and the angle is zero. 
    \item In case of \(c = 0.5\), the admissible vectors form acute angles, while in the case of \(c < 0\) (e.g., \(c = -0.3\)), the update direction vectors form obtuse angles with the average gradient. 
    \item When \(c = 0\), all the admissible vectors are orthogonal to \(g_{0}\). 
    \item As the value of \(c\) decreases below \(0\) but \(\geq -1\), \(d\) vectors that are negatively correlated become admissible.
    \item In the extreme case of \(c = -1\), the entire region becomes feasible.
\end{itemize}

\label{appxsec:geometryconic}
\begin{figure*}[!b]
    \centering
    \includegraphics[width=\linewidth]{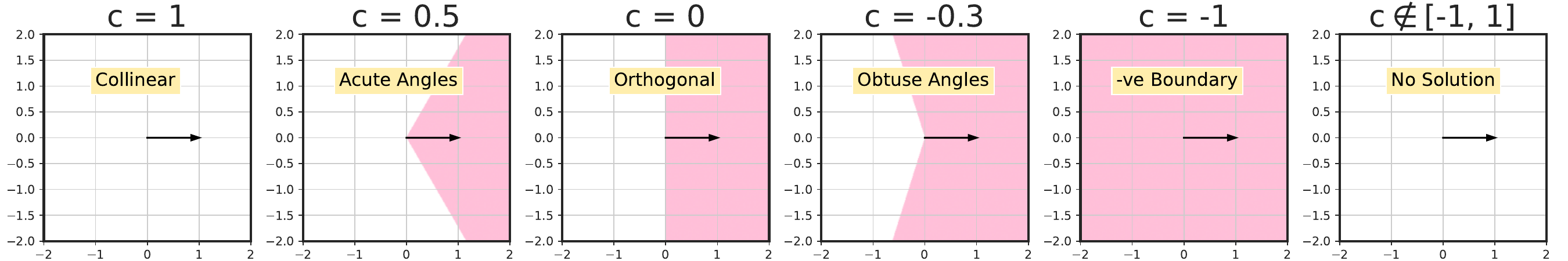}
    \caption{\textbf{Geometric Interpretation of \textsc{ConicGrad}.} 
        The black arrow indicates the main objective gradients vector \(g_0\), and the region covered in \colorbox{conicgrad}{pink} indicates the valid region. We plot several values of \(c\) to visualize how the region of admissible update direction vectors changes.
    }
    \label{fig:geometricinterpretation}
\end{figure*}

\begin{figure}[!b]
\vspace{-15pt}
    \centering
    \includegraphics[width=\linewidth]{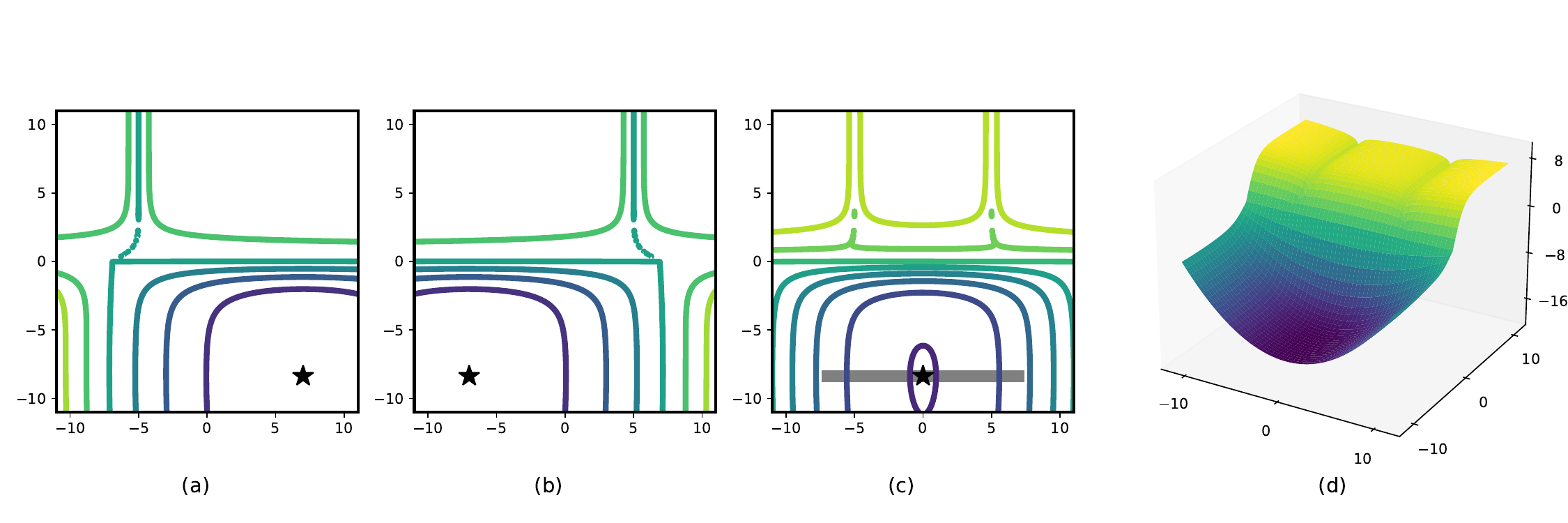}
    \caption{\textbf{Loss Landscape of the Toy Example.} 
    (a) First objective, \( \mathcal{L}_1(\theta) \);
    (b) Second objective, \( \mathcal{L}_2(\theta) \);
    (c) The overall objective, \( \frac{1}{2} \big(\mathcal{L}_1(\theta) + \mathcal{L}_2(\theta)\big) \); and 
    (d) 3D plot of (c).
    The global minima for (a), (b), and (c) are represented by \(\scalebox{0.7}{\(\bigstar\)}\),
    and the Pareto front for (c) is denoted by a gray line.
    Balancing the conflicting gradients while navigating the overall objective is crucial; 
    since without it, the optimization process is prone to getting stuck in either of the two suboptimal local minima.
    }
    \vspace{-15pt}
    \label{fig:toy_setting}
\end{figure}

\section{Toy Example for Multi-Task Optimization}
\label{app:toy_detail}

We adopt a simple yet insightful toy example from \citep{liu2021conflict} 
which serves as a minimal, interpretable framework for evaluating and visualizing the performance of multi-task learning methods under controlled conditions. 
Figure \ref{fig:toy_setting} illustrates the loss landscapes of each individual task \(\mathcal{L}_1(\theta)\) and \(\mathcal{L}_2(\theta)\) (in panels (a) and (b) respectively), 
as well as the combined objective (in panel (c)). 
The global minima for each task and the combined objective are indicated by \scalebox{0.7}{\(\bigstar\)}, 
with the gray line in (c) representing the Pareto front.
Panel (d) in \cref{fig:toy_setting} represents the 3D plot of the combined task.

The two competing objectives in the overall objective \(\frac{1}{2} \big(\mathcal{L}_1(\theta) + \mathcal{L}_2(\theta)\big)\) 
simulate scenarios where optimization methods must balance conflicting gradients effectively in order to reach the global minimum. 
Failing to do so often results in getting stuck in either of the two suboptimal local minima.
In \cref{fig:toyexample}, we visualize optimization trajectories for \textsc{ConicGrad}, 
along with those for FAMO, CAGrad, and NashMTL using five standard initialization points. 
The results reveal that NashMTL struggles with two initialization points for which it gets stuck in local minima.
FAMO reaches the Pareto front but fails to find the global optimum.
In contrast, both CAGrad and \textsc{ConicGrad} reach the global minima for all initialization points,
with \textsc{ConicGrad} convergings significantly faster than CAGrad.

Next, we provide the exact mathematical formulation of the two tasks.

\paragraph{Problem Formulation}
The toy example consists of a two-dimensional input vector 
\( \theta = [\theta_1, \theta_2] \), 
and two objectives 
\begin{align*}
    \mathcal{L}_1(\theta) &= ( f_1 \cdot c_1 + g_1 \cdot c_2 ) 
    \quad \textrm{and} \\
    \mathcal{L}_2(\theta) &= ( f_2 \cdot c_1 + g_2 \cdot c_2 ),
\end{align*}
where
\begin{align*}
    f_1 &= \log\Big(\text{max}(| 0.5(-\theta_1 - 7) - \tanh(-\theta_2) |, \epsilon)\Big) + 6, \\
    f_2 &= \log\Big(\text{max}(| 0.5(-\theta_1 + 3) + \tanh(-\theta_2) + 2 |, \epsilon)\Big) + 6, 
\end{align*}
\begin{align*}
    g_1 &= \frac{(-\theta_1 + 7)^2 + 0.1(-\theta_2 - 8)^2}{10} - 20, \\
    g_2 &= \frac{(-\theta_1 - 7)^2 + 0.1(-\theta_2 - 8)^2}{10} - 20, 
\end{align*}
and
\begin{align*}
    c_1 &= \text{max}\Big(\tanh(0.5 \theta_2), 0\Big), \\
    c_2 &= \text{max}\Big(\tanh(-0.5 \theta_2), 0\Big)
\end{align*}
with \( c_1 \) and \( c_2 \) as binary switching terms, 
and \(\epsilon = 5 \times 10^{-6}\) ensures numerical stability. 

\section{More Details on the Experiments}
\label{appdix:practicaldetails}

In this section, we discuss the practical details of our proposed method, \textsc{ConicGrad}. 
We implement our method based on the library that the authors of NashMTL~\citep{navon2022multi} released and follow the settings therein. 
For the toy example, we ran all methods for the five common initial points in the literature. 
For CityScapes, NYUv2, and CelebA, we only ran \textsc{ConicGrad}, and the performance measures of all other methods that we compare against are taken from their respective manuscripts. All of our experiments were conducted on a single NVIDIA Tesla V100 32GB GPU.

\subsection{Supervised Learning Experiments}
\label{appdix:sl_exps}


In alignment with the literature~\citep{NEURIPS2023_b2fe1ee8,xiao2024direction,navon2022multi}, 
for CityScapes and NYUv2, we train our method for \(200\) epochs with Adam optimizer~\citep{kingma2014adam} 
and step learning rate scheduler (referred to as StepLR in PyTorch) 
with a decay factor of \(0.5\) every \(100\) epochs. 
At the start of each experiment, the learning rate for \(\theta\) is set to \(0.0001\) and \(0.001\) for CityScapes and NYUv2 respectively. 
The batch size for CityScapes and NYUv2 is \(8\) an d \(2\) respectively, 
and hyperparameters \(c\) and \(\gamma\) is \(0.25, 0.001\) and \(0.75, 0.01\) for CityScapes and NYUv2 respectively. 
Following~\citep{NEURIPS2023_b2fe1ee8}, for CelebA, we train our method for \(15\) epochs with Adam optimizer, 
and there is no scheduler. 
The batch size is \(256\) with a learning rate of \(0.001\), 
and hyperparameters of \textsc{ConicGrad} \(c\) and \(\gamma\) are set to \(0.5, 0.001\). 
We use validation set in CelebA to report best performance, 
and due to lack of validation set on CityScapes and NYUv2, 
we report the average of last \(10\) epochs. 
Further, for all the datasets, we run each experiment for \(3\) random seeds and report the average result.

\paragraph{Performance Error Bars.}
In \cref{tab:nyuv2errorbars} and \cref{tab:cityscapeserrorbars}, 
we report \textsc{ConicGrad}'s mean scores along with their associated standard errors and compare them with FAMO's~\citep{NEURIPS2023_b2fe1ee8} (as reported in their paper). 
\textsc{ConicGrad} not only achieves better overall performance in majority of the metrics, 
but also consistently exhibits lower standard errors. 
This combination of superior performance and reduced variability underscores \textsc{ConicGrad}'s robustness and reliability, 
ensuring consistent results across runs---an essential factor for reproducibility.

\begin{table*}[!h]
\centering
\caption{\textbf{Results on NYUv2 (3 tasks) with Error Bars.} We repeat each experiment over \(3\) different seeds and report the average results (mean) and standard error (stderr).}
\label{tab:nyuv2errorbars}
\scalebox{0.85}{
\begin{tabular}{@{}lcccccccccc@{}}
\toprule
 & \multicolumn{2}{c}{Segmentation} & \multicolumn{2}{c}{Depth} & \multicolumn{5}{c}{Surface Normal} \\ \cmidrule(lr){2-10}
\textbf{Method} & mIoU \(\uparrow\) & Pix Acc \(\uparrow\) & Abs Err \(\downarrow\) & Rel Err \(\downarrow\) & \multicolumn{2}{c}{Angle Dist \(\downarrow\)} & \multicolumn{3}{c}{Within \(t^{\circ}\) \(\uparrow\)} & \textbf{\(\Delta\) m\%} \(\downarrow\) \\
 &  &  &  &  & Mean & Median & 11.25 & 22.5 & 30 &  \\ \midrule
FAMO (mean) & \(38.88\) & \(64.90\) & \(0.5474\) & \(0.2194\) & \(25.06\) & \(19.57\) & \(29.21\) & \(56.61\) & \(68.98\) & \(-4.10\) \\ 
FAMO (stderr) & \(\pm 0.54\) & \(\pm 0.21\) & \(\pm 0.0016\) & \(\pm 0.0026\) & \(\pm 0.06\) & \(\pm 0.09\) & \(\pm 0.17\) & \(\pm 0.19\) & \(\pm 0.14\) & \(\pm 0.39\) \\
\midrule
\textbf{\textsc{ConicGrad}} (mean) & \(38.67\) & \(65.25\) & \(0.5272\) & \(0.2170\) & \(24.70\) & \(19.37\) & \(29.58\) & \(57.09\) & \(69.56\) & \(-5.13\) \\ 
\textbf{\textsc{ConicGrad}} (stderr) & \(\pm 0.39\) & \(\pm 0.21\) & \(\pm 0.0017\) & \(\pm 0.0016\) & \(\pm 0.03\) & \(\pm 0.05\) & \(\pm 0.08\) & \(\pm 0.10\) & \(\pm 0.09\) & \(\pm 0.12\) \\ \bottomrule
\end{tabular}
}
\end{table*}

\begin{table*}[!h]
\centering
\caption{\textbf{Results on CityScapes (2 Tasks) and CelebA (40 Tasks) with Error Bars.} We repeat each experiment over \(3\) different seeds and report the average results (mean) and standard error (stderr).}
\label{tab:cityscapeserrorbars}
\scalebox{0.85}{
\begin{tabular}{@{}lccccccc@{}}
\toprule
 & \multicolumn{5}{c}{\textbf{CityScapes}} && \multicolumn{1}{c}{\textbf{CelebA}} \\ \cmidrule(lr){2-6} \cmidrule(lr){7-8}  
\textbf{Method} & \multicolumn{2}{c}{Segmentation} & \multicolumn{2}{c}{Depth} & \textbf{\(\Delta\) m\%} \(\downarrow\) && \textbf{\(\Delta\) m\%} \(\downarrow\) \\ \cmidrule(lr){2-5}
 & mIoU \(\uparrow\) & Pix Acc \(\uparrow\) & Abs Err \(\downarrow\) & Rel Err \(\downarrow\) &  &  &  \\ \midrule
FAMO (mean) & \(74.54\) & \(93.29\) & \(0.0145\) & \(32.59\) & \(8.13\) & & \(1.21\) \\ 
FAMO (stderr) & \(\pm 0.11\) & \(\pm 0.04\) & \(\pm 0.0009\) & \(\pm 1.06\) & \(\pm 1.98\) & & \(\pm 0.24\) \\
\midrule
\textbf{\textsc{ConicGrad}} (mean) & \(74.22\) & \(93.05\) & \(0.0133\) & \(30.99\) & \(4.53\) & & \(0.10\) \\ 
\textbf{\textsc{ConicGrad}} (stderr) & \(\pm 0.17\) & \(\pm 0.11\) & \(\pm 0.0001\) & \(\pm 0.92\) & \(\pm 0.73\) & & \(\pm 0.44\) \\ \bottomrule
\end{tabular}
}
\end{table*}

\subsection{Reinforcement Learning Experiments}
Our MTRL experiments are based on the MTRL codebase~\citep{Sodhani2021MTRL}, 
following literature~\citep{navon2022multi,NEURIPS2023_b2fe1ee8,xiao2024direction}. 
\textsc{ConicGrad} is trained for \(2\)M (million) steps with a batch size of \(1280\), 
and we evaluate the method every \(30\)k steps. 
The hyperparameters \(c\) and \(\gamma\) are set to \(0.75\) and \(0.01\) respectively. 
We report the best average test performance over \(10\) random seeds. 
The underlying SAC model is trained with Adam optimizer and the learning rate is set to \(0.0003\).

\end{document}